\documentclass{article} 
\usepackage{iclr2026_conference,times}
\usepackage{url}            
\usepackage{booktabs}       
\usepackage{amsfonts}       
\usepackage{nicefrac}       
\usepackage{microtype}      
\usepackage{amsmath}
\usepackage{amssymb}
\usepackage{mathtools}
\usepackage{amsthm}
\usepackage{graphicx}
\usepackage{wrapfig, lipsum, booktabs}
\usepackage{color, colortbl}
\usepackage{bbm}
\usepackage{subcaption}
\usepackage{multirow}
\usepackage{tikz}
\usepackage{enumitem}
\usepackage{booktabs}
\usepackage{multicol, multirow, threeparttable}
\usepackage{makecell}
\usepackage{xcolor}         

\definecolor{tableofcontent}{HTML}{E63E15}
\definecolor{urlcol}{HTML}{2470D8}
\usepackage[normalem]{ulem}
\useunder{\uline}{\ul}{}
\definecolor{tabblue}{HTML}{5555CC}
\usepackage{titletoc}

\definecolor{links}{HTML}{0078b0} 
\definecolor{files}{HTML}{fc6160}
\definecolor{Review1}{HTML}{4169E1}

\definecolor{Review2}{HTML}{FF8C00}

\definecolor{Review3}{HTML}{228B22}

\definecolor{Review4}{HTML}{DC143C}

\usepackage[
colorlinks=true,            
linkcolor=links,
filecolor=files,
urlcolor=links,
citecolor=links]
{hyperref}
\newcommand{\pgftextcircled}[1]{
    \setbox0=\hbox{#1}%
    \dimen0\wd0%
    \divide\dimen0 by 2%
    \begin{tikzpicture}[baseline=(a.base)]%
        \useasboundingbox (-\the\dimen0,0pt) rectangle (\the\dimen0,1pt);
        \node[circle,draw,outer sep=0pt,inner sep=0.1ex] (a) {#1};
    \end{tikzpicture}
}
\setlist{leftmargin=10mm}
\definecolor{Gray}{gray}{0.9}
\newcommand{\xhdr}[1]{{\vspace{1pt}\noindent\bfseries #1}.}

\usepackage[capitalize,noabbrev]{cleveref}
\theoremstyle{plain}
\newtheorem{theorem}{Theorem}[section]

\theoremstyle{definition}

\theoremstyle{remark}

\usepackage{xspace}

\usepackage{amsmath,amsfonts,bm}

\def\eqref#1{equation~\ref{#1}}

\def\1{\bm{1}}

\def\vg{{\bm{g}}}
\def\vh{{\bm{h}}}

\def\vp{{\bm{p}}}

\def\vx{{\bm{x}}}

\def\mA{{\bm{A}}}

\def\mD{{\bm{D}}}

\def\mH{{\bm{H}}}
\def\mI{{\bm{I}}}

\def\mL{{\bm{L}}}

\def\mR{{\bm{R}}}
\def\mS{{\bm{S}}}

\def\mU{{\bm{U}}}

\def\mX{{\bm{X}}}

\DeclareMathAlphabet{\mathsfit}{\encodingdefault}{\sfdefault}{m}{sl}
\SetMathAlphabet{\mathsfit}{bold}{\encodingdefault}{\sfdefault}{bx}{n}

\title{Routing Channel-Patch Dependencies in Time Series Forecasting with Graph Spectral Decomposition}
\iclrfinalcopy 
\author{Dongyuan Li$^{1}$ \quad Shun Zheng$^{2}$ \quad Chang Xu$^{2}$ \quad Jiang Bian$^{2}$ \quad Renhe Jiang$^{1}$\thanks{Corresponding Author}  ~\vspace{0.4em}\\ 
$^1$The University of Tokyo \quad
$^2$Microsoft Research Asia \quad
\vspace{0.4em}
  \\
{\texttt{\{lidy, Jiangrh\}@csis.u-tokyo.ac.jp}} \\
{\texttt{\{shun.zhen, chanx, jiang.bian\}@microsoft.com}}
  \vspace{0.3em}
}

\begin{document}
\maketitle
\begin{abstract}
Time series forecasting has attracted significant attention in the field of AI. Previous works have revealed that the Channel-Independent (CI) strategy improves forecasting performance by modeling each channel individually, but it often suffers from poor generalization and overlooks meaningful inter-channel interactions. Conversely, Channel-Dependent (CD) strategies aggregate all channels, which may introduce irrelevant information and lead to oversmoothing.
Despite recent progress, few existing methods offer the flexibility to adaptively balance CI and CD strategies in response to varying channel dependencies. 
To address this, we propose a generic plugin xCPD, that can adaptively model the channel-patch dependencies from the perspective of graph spectral decomposition. Specifically, xCPD first projects multivariate signals into the frequency domain using a shared graph Fourier basis, and groups patches into low-, mid-, and high-frequency bands based on their spectral energy responses.
xCPD then applies a channel-adaptive routing mechanism that dynamically adjusts the degree of inter-channel interaction for each patch, enabling selective activation of frequency-specific experts.
This facilitates fine-grained input-aware modeling of smooth trends, local fluctuations, and abrupt transitions.
xCPD can be seamlessly integrated on top of existing CI and CD forecasting models, consistently enhancing both accuracy and generalization across benchmarks. The code is available [\url{https://github.com/Clearloveyuan/xCPD}].
\end{abstract}

\section{Introduction}
Multivariate time series forecasting (MTSF), a fundamental task in time series analysis, has emerged as a key focus in artificial intelligence due to its broad applications in transportation, finance, energy, and environment. 
Early efforts primarily focused on adapting a variety of model architectures—ranging from linear models~\citep{dlinear,li2023revisiting} and convolutional networks~\citep{scinet,timesnet} to Transformers~\citep{patchtst,itransformer}—for this task, establishing a strong foundation for the field. Whatever the model architecture, the key to success lies in accurate modeling of both temporal dynamics and inter-variable (\textit{a.k.a.}, channel) dependencies. 
Building on this insight, recent works have been focusing on the channel strategy, giving rise to three distinct time series modeling paradigms: Channel Independence (CI), Channel Dependence (CD), and Channel Partiality (CP). 
CI methods~\citep{dlinear,Informer} model each channel separately, capturing only intra-channel temporal dynamics while completely ignoring inter-channel interactions. This design enhances robustness and scalability but often overlooks valuable inter-channel patterns. 
CD methods~\citep{crossformer,itransformer}, on the other hand, jointly model all channels to capture global dependencies, but may suffer from oversmoothing and noise propagation from heterogeneous or weakly correlated variables. 
CP methods~\citep{duet,CCM,hu2025timefilter} attempt to balance CI and CD by selectively modeling inter-channel relationships, adaptively adjusting the dependency structure based on contextual or learned relevance.

Despite their success, existing CP methods fundamentally struggle to capture fine-grained, patch-level, and frequency-decoupled channel dependencies. 
\textbf{First}, current CP approaches operate primarily at the channel level, either by grouping entire channels via clustering~\citep{duet,CCM} or learning channel-wise attention weights~\citep{hu2025timefilter,lee2025partial}. 
{This reliance on the entire channel embedding as the relational unit introduces a coarse-granularity bottleneck: they fail to model localized patch-level interactions or capture nuanced segment-specific patterns. 
For example, if Channel-A exhibits a smooth seasonal trend during segment $\text{T}_1$ but a sharp anomaly during segment $\text{T}_2$, channel-level models generate a single averaged dependency weight for the entire channel, failing to capture the distinct local interactions that trend and anomaly segments should establish with other channels.} 
{\textbf{Second}, while CD/CP models produce time-varying attention weights, these interactions are computed entirely in the temporal domain, where low-, mid-, and high-frequency components are mixed within a single embedding~\citep{crossgnn,moderntcn,LIFT}. 
As a result, a high attention score between two channels may simultaneously reflect a meaningful low-frequency seasonal dependence and an irrelevant high-frequency noise component, without distinguishing between them.} {This frequency coupling forces the model to treat heterogeneous spectral behaviors as a single dependency signal, often introducing spurious correlations—especially when high-frequency noise and low-frequency trends coexist. Consequently, temporal-domain dynamism alone is insufficient for capturing fine-grained and frequency-specific dependency evolution, leaving important dynamic cross-channel relationships entangled.}

To address these fundamental limitations, {we reformulate the basic modeling unit from an entire channel to a channel-patch, a localized temporal segment within a channel, following the widely adopted patching paradigm in modern sequence modeling.} We represent each channel-patch as a node in a graph, where edges denote the interactions (dependencies) between channel-patch pairs. This novel graph perspective enables a unified and highly granular treatment of \textbf{\underline{c}hannel-\underline{p}atch \underline{d}ependencies (CPD)}, encompassing inter-channel patch, intra-channel patch, and patch-level channel interactions. 
Building on this foundation, we propose xCPD where ``{x}'' denotes graph $\text{{s}pectral}$ $\text{{d}ecomposition}$ for $\text{\underline{C}hannel-\underline{P}atch}$ $\text{\underline{D}ependencies}$. $\text{xCPD}$ is a generic and lightweight plugin that enhances existing forecasting models by adaptively routing $\text{CPD}$ through the lens of graph spectral decomposition. 
Unlike prior methods, $\text{xCPD}$ operates exclusively in the spectral domain, where it leverages a shared graph Fourier basis to embed channel-patch representations and subsequently groups them by their spectral energy responses. This design enables the model to distinguish between smooth, moderately varying, and highly fluctuating patterns (\textit{i.e.}, low-, mid-, and high-frequency). 
{A dynamic Mixture-of-Experts routing mechanism further adaptively selects frequency-specific filters for each patch, enabling xCPD to capture time-varying dependencies while suppressing irrelevant frequency-coupled correlations.
Finally, xCPD is fully model-agnostic and plug-and-play, allowing seamless integration into a wide range of CI and CD backbones, including large foundation forecasters, without retraining. This provides a practical and scalable solution for real-world forecasting pipelines.}

\section{Related Work}

\textbf{Multivariate Time Series Forecasting (MTSF).} The core challenge of MTSF lies in jointly modeling inter-channel dependencies and temporal dynamics~\citep{li2023revisiting,Ada-MSHyper}. 
Early approaches like ARIMA~\citep{arima} and Prophet~\citep{triebe2021neuralprophet} focus exclusively on temporal patterns within individual series, lacking a mechanism for inter-channel modeling. 
To address this, deep learning models have been widely explored~\citep{zhou2022film,yi2023frequency,zhang2024multi}. Specifically, \textbf{RNNs}~\citep{hewamalage2021recurrent,segrnn} capture sequential patterns through recursive hidden states but suffer from vanishing gradients when modeling long-term dependencies. 
\textbf{CNNs}~\citep{timesnet,pdf} offer efficient extraction of local features through convolutions, but their limited receptive fields constrain global context understanding. 
To overcome these limitations, \textbf{Transformers}~\citep{Autoformer,Fedformer,liu2022non,feng2024efficient,timebridge} use self-attention mechanisms to capture long-range dependencies, but their permutation invariance can compromise the temporal order, which is crucial for forecasting. 
Recently, \textbf{MLPs}~\citep{dlinear,chen2023tsmixer,das2023long,amd,leddam} have shown that simple dense connections across timesteps can achieve competitive performance without architectural complexity. Beyond temporal dynamics, \textbf{GNNs}~\citep{FourierGNN,WaveForM,FC-STGNN,MSGNet} model inter-channel dependencies by treating multivariate series as graphs, capturing structured cross-variable dependencies more effectively.

\textbf{Channel Dependency Strategy.} In MTSF, dependency modeling strategies fall into three categories: 
\textbf{CI methods}~\citep{Informer,timllm,Timer} treat channels independently, capturing only intra-channel temporal patterns. While this design enhances robustness by avoiding inter-channel noise, it fails to use valuable inter-channel relationships. 
\textbf{CD methods}~\citep{itransformer,MOIRAI,units} jointly model all channels to exploit inter-channel dependencies, but introduce spurious correlations that degrade performance in noisy or heterogeneous settings. 
\textbf{CP methods}~\citep{crossgnn,moderntcn,LIFT,lee2025partial} balance CI and CD strategies by selectively modeling inter-channel interactions based on intrinsic channel relationships. Recent approaches like DUET~\citep{duet} and  CCM~\citep{CCM} employ clustering or sparse attention to adaptively identify relevant dependencies. 
However, although recent methods such as TimeFilter~\citep{hu2025timefilter} and PCD~\citep{lee2025partial} introduce time-varying and even patch-level interactions, they still operate entirely in the {temporal} domain. As a result, their dependency scores remain {frequency-coupled}: a single attention  mixes low-frequency trends, mid-frequency variations, and high-frequency noise into one unified correlation signal. This prevents the model from distinguishing which frequency component is responsible for the observed interaction, limiting its ability to capture {frequency-aware} and {fine-grained} patch-level dependency patterns that evolve over time.
In contrast, xCPD performs dependency modeling in the spectral domain at the channel-patch level, enabling fine-grained and interpretable modeling of frequency-specific interactions that evolve over time. Specifically, xCPD differentiates itself from prior CP methods across three critical dimensions: \textbf{modeling granularity} ($\text{channel-patch}$ vs. $\text{channel}$), \textbf{modeling domain} ($\text{spectral}$ vs. $\text{temporal}$), and \textbf{adaptivity} ($\text{frequency-specific}$ $\text{routing}$ vs. $\text{mixed}$ $\text{attention}$). 
Furthermore, as a lightweight and fully model-agnostic plugin, $\text{xCPD}$ can be seamlessly integrated into a wide range of forecasting architectures without any backbone retraining, substantially enhancing its practical efficiency, deployment flexibility, and scalability in real-world forecasting systems.

\section{Preliminaries}
\vspace{-6pt}
\xhdr{Multivariate Time Series Forecasting}
Let \begin{footnotesize}$\mX=[\vx_1, \ldots, \vx_C]\in\mathbb{R}^{C\times T}$\end{footnotesize} denote a multivariate time series, where $C$ represents the number of channels (\textit{i.e.,} variates) and $T$ denotes the lookback horizon. 
MTSF aims to build a predictive model \begin{footnotesize}$f_{\text{model}}$: $\mathbb{R}^{C\times T}$$\rightarrow$$\mathbb{R}^{C\times T'}$\end{footnotesize} that takes historical observations $\mX$ as input and generates future predictions 
\begin{footnotesize}$\widehat{\mX} = [\hat{\boldsymbol{x}}_{1}, \ldots, \hat{\boldsymbol{x}}_{C}]\in \mathbb{R}^{C\times T'}$\end{footnotesize}, where $T'$ is the forecasting horizon. 
In this study, rather than building \begin{footnotesize}$f_{\text{model}}$\end{footnotesize}, we develop a generic plugin model \begin{footnotesize}$f_{\text{plugin}}$: $\mathbb{R}^{C\times T'}$$\rightarrow$$\mathbb{R}^{C\times T'}$\end{footnotesize} that post-processes the predictions from existing models. Given a base model's output \begin{footnotesize}$\widehat{\mX}^{\text{model}} = f_{\text{model}}(\mX)$\end{footnotesize}, our plugin refines predictions to obtain improved forecasts \begin{footnotesize}$\widehat{\mX} = f_{\text{plugin}}(\widehat{\mX}^{\text{model}})$\end{footnotesize}.

\xhdr{Graph Signal Processing}
Given a graph $\mathcal{G}=(\mathcal{V},\mathcal{E})$ with node set $\mathcal{V}$ and edge set $\mathcal{E}$, 
let $\mA$ and $\mD$ denote the adjacency matrix and the degree matrix, respectively. 
The normalized Laplacian graph is defined as $\mL = \mI - \mD^{-1/2} \mA \mD^{-1/2}$. 
Its eigendecomposition is $\mL = \mU \boldsymbol{\Lambda} \mU^\top$, where $\mU$ represents the graph Fourier basis, and \begin{footnotesize}$\boldsymbol{\Lambda} = \text{diag}(\lambda_1, \ldots, \lambda_n)$\end{footnotesize} is a diagonal matrix containing the eigenvalues.

\vspace{-6pt}
\section{Methodology}
\label{method}
\vspace{-6pt}

We propose xCPD, a lightweight plugin module designed to enhance multivariate forecasting models. 
As shown in Figure~\ref{fig:overall}, xCPD comprises three key components: Spectral Channel-Patch Embedding (Section~\ref{Embedding}), Channel-Patch Grouping (Section~\ref{Grouping}), and Channel-Patch Routing (Section~\ref{Routing}). 

\begin{figure}[t]
    \centering
    \includegraphics[width=0.95\linewidth]{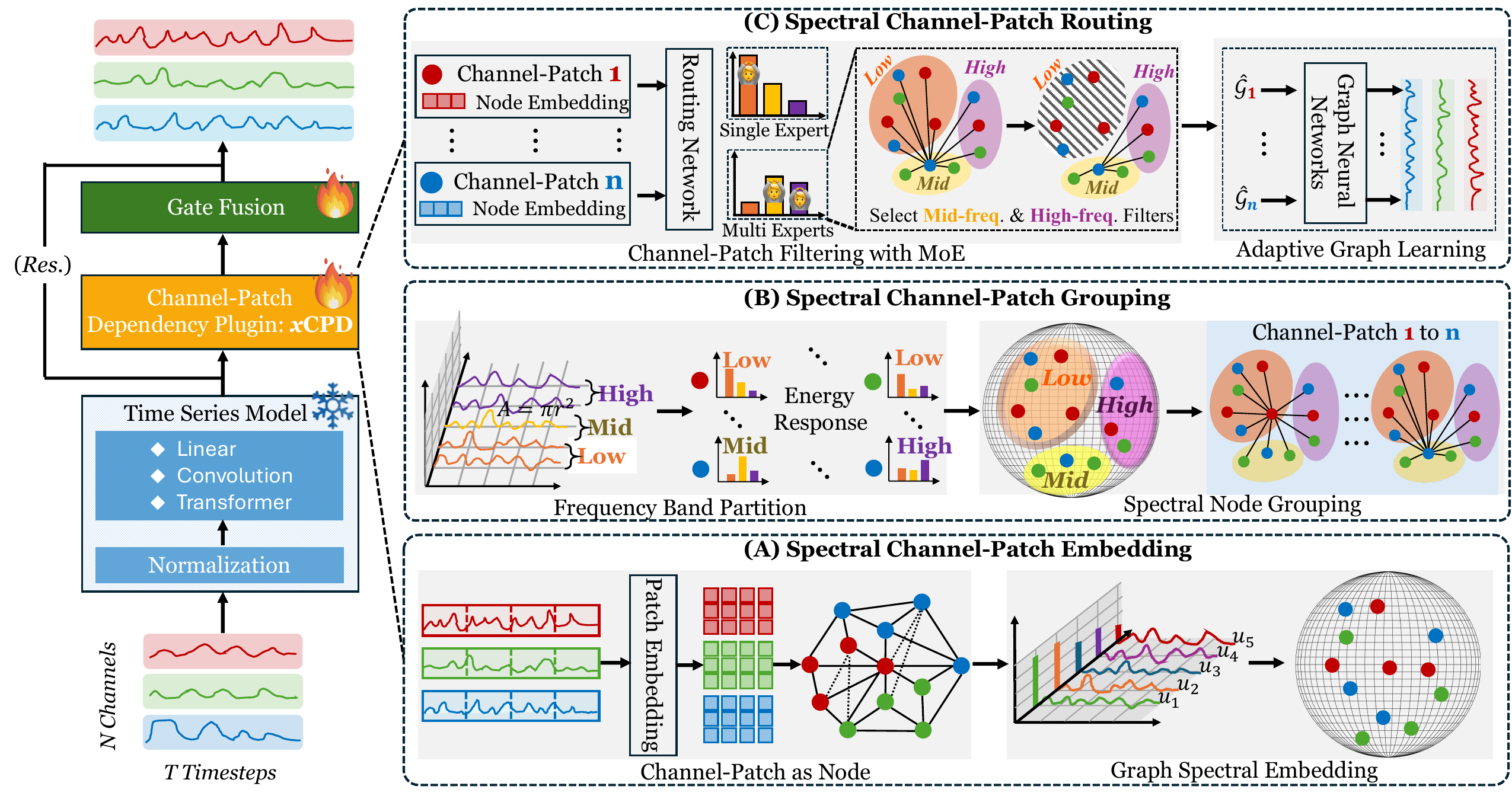}
    \caption{The proposed xCPD plugin consists of three modules: (A) spectral channel-patch embedding, (B) channel-patch grouping, and (C) channel-patch routing with Mixture-of-Experts (MoE).} 
    \label{fig:overall}
\end{figure}

\vspace{-8pt}
\subsection{{Spectral Channel-Patch Embedding}}
\label{Embedding}

As shown in Figure~\ref{fig:overall}{(A)}, we first construct a channel-patch graph to model fine-grained patch-level dependencies, where each patch from every channel forms a ``channel-patch'' node. We then apply graph spectral decomposition to obtain spectral embeddings that capture spectral dependencies.

\xhdr{Channel-Patch as Node} We first partition the output $\widehat{\mX}^\text{model}\in\mathbb{R}^{C\times T'}$  from a time series model into {contiguous, non-overlapping temporal patches ${\mX}^\text{patch}\in\mathbb{R}^{C\times (N\times P)}$}, where \begin{footnotesize}
$N=\lceil\frac{T'}{P}\rceil$\end{footnotesize} is the number of patches, and $P$ is the patch length. Subsequently, we apply $\text{Linear}(\cdot)$ to transform each patch from length $P$ to a hidden dimension $d$, obtaining the embedded tokens ${\mX}^\text{emb}\in\mathbb{R}^{C\times N\times d}$ as:
\begin{equation}
    {\mX}^\text{patch}=\text{Patching}(\widehat{\mX}^\text{model}),\ \ {\mX}^\text{emb}=\text{Linear}({\mX}^\text{patch})
\end{equation}
{By flattening the dimensions to $n = C \times N$, we obtain channel-patch node embeddings as $\mX^{\text{emb}} \in \mathbb{R}^{n \times d}$. For the embedding matrix of the $t$-th input $\mX^{\text{emb},t}$, we construct a weighted similarity graph using a 
\emph{cosine-similarity} formulation:
$A^t_{ij}=\cos\,(
\mX^{\text{emb},t}_i,\,
\mX^{\text{emb},t}_j).$ Cosine similarity yields affinity values bounded within $[-1,1]$ and automatically introduces self-connections through $A_{ii}^t = 1$. Because it is invariant to scaling, the resulting graph is unaffected by magnitude differences across channels, which is especially important in MTSF where variables operate at different physical scales.
}

\xhdr{Graph Spectral Embedding} Time series channels often exhibit complex dependencies that are more discernible in the spectral domain, where correlated channels tend to share similar spectral patterns~\citep{DBLP:conf/nips/CaoWDZZHTXBTZ20}. 
Motivated by this, we learn spectral embeddings for channel-patch nodes to effectively capture their underlying spectral dependencies. 
However, batch-wise spectral decomposition yields inconsistent Fourier bases across batches, resulting in incomparable embeddings~\citep{DBLP:conf/cvpr/StreicherCG23}. 
To overcome this, we introduce a shared graph Fourier basis that ensures a consistent spectral domain across all timesteps. This shared basis is learned to closely approximate the eigenbasis of individual batches, minimizing their differences as bounded in Theorem~\ref{theorem1}.
\begin{theorem}[Shared Graph Fourier Basis]\label{theorem1}
{Let $\mL^{t}$ be normalized Laplacian graph derived from matrix $\mA^{t}$} at the $t$-th timestep where $t \in$ \{1,\dots,$T'$\}. Define the average Laplacian as $\mL_\text{avg}$=$\frac{1}{T'}\sum_{t=1}^{T'}\mL^{t}$, with eigen-decomposition $\mL_\text{avg}$=$\mU\mathbf{\Lambda}\mU^{\top}$. 
$\mU$ denotes the shared Fourier basis that approximates each individual eigenbasis $\mU^t$ from the decomposition of $\mL^t$, satisfying $\|\mU^{t} - \mU\mathbf{R}^{t}\|_F \leq \mathbf{C}\|\mL^{t} - \mL_\text{avg}\|_F$, where $\mathbf{C}$ depends on the eigen-gap of $\mL_\text{avg}$ and $\mathbf{R}^t$ is an orthogonal rotation matrix.
\end{theorem}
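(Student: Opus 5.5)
The plan is to read Theorem~\ref{theorem1} as a perturbation statement. Each $\mL^t$ is a symmetric perturbation of the symmetric matrix $\mL_\text{avg}$, namely $\mL^t = \mL_\text{avg} + \mathbf{E}^t$ with $\mathbf{E}^t = \mL^t - \mL_\text{avg}$, and the eigenvector perturbation is controlled by the Davis--Kahan $\sin\Theta$ theorem in the Frobenius-norm form of Yu, Wang and Samworth. Symmetry of both matrices is immediate: $A^t$ is a cosine-similarity matrix and hence symmetric, so $\mL^t$ is symmetric, and so is the average $\mL_\text{avg}$.

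One technicality comes first. Because cosine similarities may be negative, the degrees $D_{ii}$ can be nonpositive and $\mD^{-1/2}$ may be undefined. I will assume $\mD$ has positive diagonal, for example by clipping negative affinities or because the self-loop $A^t_{ii}=1$ dominates. Under that assumption each $\mL^t$ is a well-defined real symmetric matrix, and no further property such as PSD-ness is needed.

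The core argument applies Davis--Kahan eigenvalue cluster by cluster. Group the eigenvalues of $\mL_\text{avg}$ into distinct clusters $\Lambda_1,\dots,\Lambda_k$, with $\mU=[\mU_1,\dots,\mU_k]$ the corresponding blocks of eigenvectors. Let $\delta = \min_{i\neq j}\operatorname{dist}(\Lambda_i,\Lambda_j)>0$ be the eigen-gap. For each cluster, Davis--Kahan with Weyl's inequality gives an orthogonal $\mathbf{O}_j$ such that
\[
\|\mU^t_j - \mU_j\mathbf{O}_j\|_F \le \frac{2^{3/2}\|\mathbf{E}^t\|_F}{\delta},
\]
where $\mU^t_j$ spans the eigenvectors of $\mL^t$ that match cluster $j$ after ordering. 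Set $\mathbf{R}^t = \operatorname{blockdiag}(\mathbf{O}_1,\dots,\mathbf{O}_k)$. This matrix is orthogonal, and summing the squared cluster bounds gives
\[
\|\mU^t-\mU\mathbf{R}^t\|_F \le \frac{2^{3/2}\sqrt{k}}{\delta}\,\|\mL^t-\mL_\text{avg}\|_F,
\]
so $\mathbf{C}=2^{3/2}\sqrt{k}/\delta$. The rotation $\mathbf{R}^t$ is needed precisely because eigenvectors are determined only up to sign, and up to rotation within repeated eigenvalues. A cruder but cleaner choice, fine when all eigenvalues of $\mL_\text{avg}$ are simple, takes $\mathbf{R}^t$ diagonal with signs and $\mathbf{C}=2^{3/2}\sqrt{n}/\min_i|\lambda_i-\lambda_{i+1}|$.

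The main obstacle is the regime of large perturbations, where $\|\mathbf{E}^t\|$ exceeds about $\delta/2$ and the eigenvalue ordering of $\mL^t$ need not match the clusters of $\mL_\text{avg}$. I would handle this with a trivial bound. Any two matrices with orthonormal columns satisfy $\|\mU^t-\mU\mathbf{R}^t\|_F\le 2\sqrt{n}$. If $\|\mathbf{E}^t\|_F\ge\delta/2$, then $2\sqrt{n}\le (4\sqrt{n}/\delta)\|\mathbf{E}^t\|_F$, so enlarging the constant to $\mathbf{C}=\max(2^{3/2}\sqrt{k},4\sqrt{n})/\delta$ makes the inequality hold unconditionally. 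This constant still depends only on the eigen-gap of $\mL_\text{avg}$ and on the dimension, as the statement requires.
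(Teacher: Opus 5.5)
Your proof is correct. It uses the same tool as the paper's proof (Davis--Kahan $\sin\Theta$ followed by the $\sqrt{2}$ Procrustes step), but your decomposition is genuinely different and more careful.

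The paper applies Davis--Kahan once to the full bases $\mU$ and $\mU^t$, with $\delta=\min_{i\neq j}|\lambda_i-\lambda_j|$. It assumes the Laplacians are PSD, leaves $\mathbf{R}^t$ an unspecified orthogonal matrix, and gets $\mathbf{C}=\sqrt{2}/\delta$. Read literally, this is degenerate: both column spans are all of $\mathbb{R}^n$, and for unrestricted $\mathbf{R}^t$ the choice $\mathbf{R}^t=\mU^\top\mU^t$ already makes the left-hand side zero. The statement only has content once $\mathbf{R}^t$ is confined to rotations inside the eigenspaces of $\mL_\text{avg}$, which is exactly your block-diagonal choice.

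Your cluster-wise treatment has further advantages. It survives repeated eigenvalues of $\mL_\text{avg}$, where the paper's $\delta$ is $0$. You are also right that only symmetry is needed: with negative cosine affinities, the paper's PSD and $[0,2]$ claims can fail.

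The one thing the paper's route appears to buy is a dimension-free constant, but not at the value it states. Treat it as a perturbation bound for symmetric matrices, which is all either proof uses. Take $\mL_\text{avg}=\mathrm{diag}(0,\delta)$ and $\mL^t=\mathrm{diag}(\delta/2+\epsilon,\delta/2-\epsilon)$. The sorted eigenbasis of $\mL^t$ is the swap, so every admissible (here, sign) $\mathbf{R}^t$ gives $\|\mU^t-\mU\mathbf{R}^t\|_F=2$. By contrast, $\sqrt{2}\|\mL^t-\mL_\text{avg}\|_F/\delta=1+2\epsilon/\delta$.

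Two refinements to your write-up. First, your last paragraph is unnecessary. The Yu--Wang--Samworth bound has no smallness hypothesis, because its proof works with $\mL_\text{avg}\mU^t_j-\mU^t_j\boldsymbol{\Lambda}_j$, i.e.\ against the \emph{unperturbed} cluster eigenvalues. This gives $\delta\,\|\sin\Theta(\mU^t_j,\mU_j)\|_F\le\|\mathbf{E}^t\mU^t_j\|_F+\|\boldsymbol{\Lambda}^t_j-\boldsymbol{\Lambda}_j\|_F$, where $\boldsymbol{\Lambda}^t_j,\boldsymbol{\Lambda}_j$ are the index-matched eigenvalue blocks. A large eigenvalue mismatch is therefore paid for rather than excluded.

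Second, the factor $\sqrt{k}$ is an artifact of bounding each cluster by the full $\|\mathbf{E}^t\|_F$. Square and sum the inequality above over $j$, using $\sum_j\|\mathbf{E}^t\mU^t_j\|_F^2=\|\mathbf{E}^t\|_F^2$ and Hoffman--Wielandt, $\sum_j\|\boldsymbol{\Lambda}^t_j-\boldsymbol{\Lambda}_j\|_F^2\le\|\mathbf{E}^t\|_F^2$. This gives $\sum_j\|\sin\Theta(\mU^t_j,\mU_j)\|_F^2\le 4\|\mathbf{E}^t\|_F^2/\delta^2$, hence $\mathbf{C}=2\sqrt{2}/\delta$, independent of $n$ and $k$. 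Letting $\epsilon\to 0$ in the example above shows this constant cannot be improved.
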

{Theorem~\ref{theorem1} establishes the theoretical foundation for our shared Fourier basis, ensuring that all batches are mapped to a consistent and comparable spectral domain. Importantly, because the basis is learned from the normalized Laplacian, whose eigenvalues lie within the bounded interval [0,2], the resulting spectral operator is both scale-invariant and numerically stable across timesteps.} 
Following Theorem~\ref{theorem1}, we construct the shared basis~$\mU$ and compute spectral embeddings as:
\begin{equation}\label{eq:spec-transform}
{\mX}^\text{spc,t} = \mathcal{F}({\mX}^\text{emb,t}) = \mU^{\top}{\mX}^\text{emb,t} \in \mathbb{R}^{n \times d},
\end{equation}
where $\mathcal{F}(\cdot)$ denotes the graph Fourier transformation, and ${\mX}^\text{spc,t}$ denotes nodes spectral embeddings. For notational simplicity, we omit the time index $t$ hereafter, e.g., we write $\mX^\text{spc}$ instead of $\mX^{\text{spc},t}$. 

\vspace{-6pt}
\subsection{{Spectral Channel-Patch Grouping}}
\label{Grouping}
\vspace{-6pt}

As shown in Figure~\ref{fig:overall}{(B), we develop a spectral grouping mechanism that classifies nodes by their dominant frequencies (low/mid/high) through energy analysis. We then construct ego-graphs connecting nodes with similar frequency properties to capture meaningful frequency-aware dependencies.

\xhdr{Frequency Band Partition}
{We employ learnable parameters $\tau_1, \tau_2 \in [1,n]$ with $\tau_1<\tau_2$ to define adaptive boundaries}.
To ensure differentiability, we compute soft membership weights as follows:
\begin{equation}
\alpha_j^{\text{low}} = \text{Sigmoid}(m(\tau_1 - j)), \,\, \alpha_j^{\text{high}} = \text{Sigmoid}(m(j - \tau_2)), \,\, \alpha_j^{\text{mid}} = 1 - \alpha_j^{\text{low}} - \alpha_j^{\text{high}},
\end{equation}
where $j \in \{1,\ldots,n\}$ indexes frequencies and temperature $m > 0$ controls sharpness. {$\alpha_j \in [0,1]$ form a normalized convex partition, ensuring stable band membership for each frequency component.}

\xhdr{Spectral Node Grouping} 
${\mX}^\text{spc}$ captures feature-level frequency responses~\citep{DBLP:conf/nips/CaoWDZZHTXBTZ20}, but it does not directly reflect node-specific frequency intensities. 
We address this by defining a spectral energy function to quantify how strongly each node responds to different frequency components as: 
\begin{theorem}[Spectral Energy Response]\label{theorem-4.2}
Given spectral node embeddings $\mX^\text{spc}$ and graph Fourier basis $\mU$, spectral energy response of node $i$ for frequency $j$ is \begin{footnotesize}
$\mS_{i,j} = \|\mU_{i,j} \cdot \mX^{\text{spc}}_{j,:}\|_2^2.$
\end{footnotesize}The orthonormality of $\mU$ ensures energy preservation between spatial and spectral domains: 
\begin{footnotesize}
$\sum_{j=1}^{n} \mS_{i,j} = \|\mX^\text{emb}_{i,:}\|_2^2,$
\end{footnotesize} confirming spectral energy response retains all information from original channel-patch embeddings.
\end{theorem}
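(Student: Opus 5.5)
The plan is to work entirely from the inverse graph Fourier transform. Since $\mU$ is the orthonormal eigenbasis of the symmetric matrix $\mL_\text{avg}$ from Theorem~\ref{theorem1}, we have $\mU\mU^\top=\mU^\top\mU=\mI$. Hence equation~\ref{eq:spec-transform} inverts to $\mX^\text{emb}=\mU\mX^\text{spc}$. Row by row this reads
\begin{equation*}
\mX^\text{emb}_{i,:}=\sum_{j=1}^{n}\mU_{i,j}\,\mX^\text{spc}_{j,:}.
\end{equation*}
So the embedding of node $i$ is a superposition of the $n$ frequency components, and $\mU_{i,j}\mX^\text{spc}_{j,:}$ is exactly the contribution of frequency $j$ to node $i$. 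With the scalar $\mU_{i,j}$ pulled out, $\mS_{i,j}=\mU_{i,j}^2\,\|\mX^\text{spc}_{j,:}\|_2^2$ is the squared norm of that contribution. This is how I would motivate the definition.

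Next I would expand the squared norm of node $i$'s embedding:
\begin{equation*}
\|\mX^\text{emb}_{i,:}\|_2^2=\sum_{j}\mU_{i,j}^2\|\mX^\text{spc}_{j,:}\|_2^2+\sum_{j\neq k}\mU_{i,j}\mU_{i,k}\langle\mX^\text{spc}_{j,:},\mX^\text{spc}_{k,:}\rangle .
\end{equation*}
The first sum is precisely $\sum_j\mS_{i,j}$. The claimed identity is therefore equivalent to the vanishing of the cross term.

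I would then establish unconditionally the global (Parseval) version, summing over nodes. Orthonormality of the columns of $\mU$ gives $\sum_i\mU_{i,j}^2=1$, so
\begin{equation*}
\sum_{i,j}\mS_{i,j}=\sum_j\|\mX^\text{spc}_{j,:}\|_2^2=\|\mU^\top\mX^\text{emb}\|_F^2=\|\mX^\text{emb}\|_F^2 .
\end{equation*}
Equivalently, $\sum_i$ of the cross term equals $\sum_{j\neq k}(\mU^\top\mU)_{jk}\langle\cdot,\cdot\rangle=0$. This is the rigorous content of ``energy preservation between spatial and spectral domains''.

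The main obstacle is the per-node statement. For a fixed $i$ the cross term need not vanish; a generic $\mU$ and $\mX^\text{spc}$ give counterexamples already at $n=2$. My approach would be to state explicitly the condition under which it holds: the spectral rows $\mX^\text{spc}_{j,:}$ must be mutually orthogonal, i.e.\ $\mX^\text{spc}(\mX^\text{spc})^\top$ is diagonal. I would also note that it holds trivially when $d=1$ only if at most one component is active, and in expectation when the spectral components are uncorrelated with zero mean. Under that hypothesis the cross term drops and $\sum_j\mS_{i,j}=\|\mX^\text{emb}_{i,:}\|_2^2$ follows. Without it, I would present the exact node-level decomposition above together with the exact global identity as the justification that no energy is lost in the spectral grouping.
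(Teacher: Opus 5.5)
Your analysis is correct, and it is more accurate than the paper's own proof. The paper takes exactly your first two steps. It inverts the transform to $\mX^\text{emb}_{i,:}=\sum_j\mU_{ij}\mX^\text{spc}_{j,:}$ and expands $\|\mX^\text{emb}_{i,:}\|_2^2$. It then simply asserts that ``since the Fourier basis vectors are orthogonal, the cross terms vanish.'' That is the step you declined to take, and you were right to decline it. Column orthonormality gives $\sum_i\mU_{ij}\mU_{ik}=\delta_{jk}$, which cancels the cross terms only after summing over nodes; that is your global Parseval identity. Row orthonormality, $\sum_j\mU_{ij}\mU_{kj}=\delta_{ik}$, does not involve the cross terms at all.

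You should replace ``a generic choice gives counterexamples'' with an explicit one, and the paper's own pipeline supplies it. Take $n=2$, $d=1$, $\mX^\text{emb}=(2,1)^\top$, at a single timestep so that $\mL_\text{avg}=\mL$. The cosine graph is $\mA=\bigl(\begin{smallmatrix}1&1\\1&1\end{smallmatrix}\bigr)$, so $\mL=\mI-\tfrac12\mA$. Its eigenvalues $0$ and $1$ are simple, which gives $\mU=\tfrac1{\sqrt2}\bigl(\begin{smallmatrix}1&1\\1&-1\end{smallmatrix}\bigr)$ (up to column signs, which do not affect $\mS$) and $\mX^\text{spc}=\tfrac1{\sqrt2}(3,1)^\top$. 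Then $\mS_{1,1}+\mS_{1,2}=\tfrac94+\tfrac14=\tfrac52\neq 4=\|\mX^\text{emb}_{1,:}\|_2^2$, and node $2$ likewise gets $\tfrac52\neq 1$. Meanwhile $\sum_{i,j}\mS_{i,j}=5=\|\mX^\text{emb}\|_F^2$. So the node-level statement is false as written. Your plan is the correct salvage, not a gap: prove the global identity unconditionally, and the node-level one under an added hypothesis.

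Two refinements to your write-up. First, diagonality of $\mathbf{G}:=\mX^\text{spc}(\mX^\text{spc})^\top$ is sufficient but not necessary: for $\mU=\mI$, or any signed permutation, the identity holds for every signal. So ``must'' should be ``suffices.'' The exact criterion comes from the two formulas $\sum_j\mS_{i,j}=(\mU\,\mathrm{diag}(\mathbf{G})\,\mU^\top)_{ii}$ and $\|\mX^\text{emb}_{i,:}\|_2^2=(\mU\mathbf{G}\mU^\top)_{ii}$. Equality at node $i$ holds if and only if $\bigl(\mU(\mathbf{G}-\mathrm{diag}(\mathbf{G}))\mU^\top\bigr)_{ii}=0$. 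Because the rows of $\mU$ are unit vectors, this also shows that $\sum_j\mS_{i,j}$ is a convex combination of the band energies $\|\mX^\text{spc}_{j,:}\|_2^2$ with weights $\mU_{ij}^2$. That is an honest description of what the grouping step actually consumes. Second, your $d=1$ sentence should be reworded to match. For $d=1$ the orthogonality hypothesis degenerates to ``at most one nonzero spectral coefficient,'' but that describes the hypothesis, not a necessary condition for the identity.
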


Based on Theorem~\ref{theorem-4.2}, we classify the nodes by their maximum spectral energy response as:
\begin{equation}
\text{Group}(i) = \underset{b \in \{0,1,2\}}{\arg\max}
\frac{\exp(\sum_{j=1}^{n} \alpha_j^{b} \cdot \mS_{i,j})}
{\sum_{b' \in \{0,1,2\}} \exp(\sum_{j=1}^{n} \alpha_j^{b'} \cdot \mS_{i,j})},\,\, \{0,1,2\}\, \text{denotes}\, \{\text{low}, \text{mid}, \text{high}\}, 
\end{equation}
where $\alpha_j^{b}$ are the soft membership weights and $\mS_{i,j}$ is the spectral energy response of Theorem~\ref{theorem-4.2}.

Following \citet{crossgnn}, we decompose the graph $\mathcal{G} = (\mathcal{V}, \mathcal{E})$ into $n$ ego-graphs $\{\mathcal{G}_i\}_{i=1}^n$ to reduce noise. Each ego-graph $\mathcal{G}_i=(\mathcal{V},\mathcal{E}_i)$ is centered on node $i$. 
We construct $i$-th ego-graph adjacency matrices $\mA^{(i)}$ using the $k$-Nearest Neighbor ($k$-NN) method on similarity matrix as:
\begin{equation}
    \mA = k\text{-NN} \big( \text{Norm}({\mX}^\text{emb}({\mX}^\text{emb})^{\top}), \, \alpha \big), \quad \mA^{(i)}_{pq} =  \begin{cases}
\mA_{pq}, & \text{if $p = i$ or $q = i$}, \\
0, & \text{otherwise},
\end{cases}
\end{equation}
where $p,q\in\{1,\dots,n\}$, $\alpha$ is a hand-tuned scaling factor that determines the number of neighbors $k = \lfloor \alpha \cdot n \rfloor$, and $\text{Norm}(\cdot)$ applies row-wise normalization. 
Within each ego-graph $\mathcal{G}_i$, we create frequency-specific subgraphs by grouping neighboring nodes based on their spectral labels, enabling frequency-aware message passing, as illustrated in Figure~\ref{fig:overall}.

\subsection{{Spectral Channel-Patch Routing}}
\label{Routing}

As shown in Figure~\ref{fig:overall}(C), we introduce a Mixture of Experts (MoE) routing module that dynamically selects frequency-specific filters for each node to capture diverse spectral dependencies. This adaptive selection enables tailored graph learning of channel-patch relationships.

\textbf{Channel-Patch Filtering with MoE}.
Different time series exhibit heterogeneous spectral dependencies. To address this, our routing network assigns each ego-graph to one of three specialized frequency filters, which construct adjacency matrices by emphasizing relevant spectral components while suppressing irrelevant dependencies. We design three specialized filters as follows:
\begin{itemize}[
topsep=0pt,        
partopsep=0pt,     
itemsep=2pt,       
parsep=0pt,        
leftmargin=1em     
]
\item \textit{Low-Frequency Filter} constructs adjacency matrices using low-frequency spectral components, capturing smooth, slowly evolving patterns and long-term regularities in time series, such as seasonal cycles and periodic trends~\citep{yi2024filternet}. 
\item  \textit{Mid-Frequency Filter} constructs adjacency matrices using mid-frequency spectral components, modeling moderate dynamics and local fluctuations in semi-stationary processes, such as shifting traffic patterns and regional weather transitions~\citep{yu2025refocus}.
\item \textit{High-Frequency Filter} constructs adjacency matrices using high-frequency spectral components, detecting rapid changes and discontinuities in volatile processes, 
such as system anomalies, market disruptions, and abrupt state transitions~\citep{galib2024fide}.
\end{itemize}

Unlike classical MoE with fixed Top-$K$ expert assignment, our Dynamic MoE (DyMoE) 
selects a variable number of experts per input. 
For each ego-graph $\mathcal{G}_{i}$, 
we compute routing scores for filters as:
\begin{equation}
\psi(\vx_i) = \text{Linear}_c(\vx_i) + \epsilon \cdot \text{Softplus}(\text{Linear}_n(\vx_i)) \in \mathbb{R}^{3},
\end{equation}
where $\vx_{i}$ is the $i$-th row of ${\mX}^\text{emb}$, $\text{Linear}_c$ and
$\text{Linear}_{n}$ compute deterministic and stochastic components, respectively, $\epsilon$ is a noise scale parameter. DyMoE sorts $\psi(\vx_i)$ in descending order and selects the minimum number of top-ranked filters whose cumulative probability exceeds the threshold $\tau$: 
\begin{equation}
s = \min\left\{\ell \in \{1,2,3\} : \sum_{g=1}^{\ell} \vp_{m} \geq \tau\right\}, \,\,\,\, \vp = \text{Softmax}(\psi(\vx_{i})),
\end{equation}
where $\vp_{g}$ denotes the $g$-th largest element. 
For each ego-graph $\mathcal{G}_{i}$, this produces an adaptive expert set 
$\Psi(\mathcal{G}_{i}) = \{\text{Filter}_{q} \mid 1 \leq q \leq s\}$, 
where $s \in \{1,2,3\}$ denotes the number of selected experts.

\textbf{Adaptive Graph Learning}. 
Using the selected filters $\Psi(\mathcal{G}_i)$ from DyMoE, we construct a sparse adjacency matrix $\mA^i$ for each ego-graph $\mathcal{G}_{i}$ by filtering irrelevant dependencies (edges) as: 
\begin{equation}
\footnotesize
{\mathcal{G}}_i = \{\mathcal{V}, \mathcal{E}_i=\bigcup_{q=1}^{s}\text{Filter}_{q}(\mathcal{G}_{i})\}, \quad
{\mA}^i = k\text{-NN}({\mathcal{G}}_i),
\end{equation}
where $\text{Filter}_{q}(\mathcal{G}_{i})$ extracts edges based on specific frequency bands (low, mid, or high), and $k$-NN constructs a sparse adjacency matrix by retaining only the $k$ nearest neighbors for each central node. 
Then, the graph learning layer adaptively captures the hidden relationships among time series as:
\begin{equation}
\footnotesize
    \vh_{i}^{(\ell)}=\texttt{COMB}(\vh_{i}^{(\ell-1)},\texttt{AGGR}\{\vh_{j}^{(\ell-1)}:v_{j}\in\mathcal{N}(v_{i})\}), 
\end{equation}
where \begin{footnotesize}$\ell \in \{1, \ldots, L\}$\end{footnotesize} denotes the layer index, \begin{footnotesize}$\mathcal{N}(v_i)$\end{footnotesize} denotes the neighborhood of node $v_{i}$, \begin{footnotesize}$\vh_{i}^{(0)}$\end{footnotesize} corresponds to the $i$-th row of \begin{footnotesize}$\mX^\text{emb}$\end{footnotesize}, and \begin{footnotesize}$\vh_{i}^{(\ell)}$\end{footnotesize} denotes the embedding of node $v_i$ at the $\ell$-th layer. \begin{footnotesize}$\texttt{AGGR}(\cdot)$\end{footnotesize} and \begin{footnotesize}$\texttt{COMB}(\cdot)$\end{footnotesize} are the functions used to aggregate neighborhood information and combine ego- and neighbor-embeddings~\cite{xia2026graph}.  
After $L$ layers of propagation, we obtain the final node embeddings as:
\begin{equation}
\mH^{(L)} = [\hat{\vh}_1^{(L)}, \ldots, \hat{\vh}_n^{(L)}]^\top \in \mathbb{R}^{C \times T'}, \,\, \text{where} \,\, \hat{\vh}_i^{(L)} = \text{Linear}(\vh_i^{(L)}).
\end{equation}

\subsection{{Prediction and Optimization}}
\label{Optimization}

\textbf{Prediction}.
xCPD produces the final prediction via a gated dual-path residual correction:
\begin{footnotesize}
\begin{equation}
\widehat{\mX}^{\text{predict}} = \widehat{\mX}^{\text{model}} + \sigma(\vg_{\text{GNN}}) \odot \delta_{\text{GNN}} + \sigma(\vg_{\text{Lin}}) \odot \delta_{\text{Lin}}
\end{equation}
\end{footnotesize}where $\delta_{\text{GNN}} = W_{\text{proj}}\mH^{(L)}$ is the GNN-path correction capturing cross-variable spectral dependencies, $\delta_{\text{Lin}} = f_{\text{lin}}(\widehat{\mX}^{\text{model}})$ is the Linear-path correction preserving CI refinement, and $\vg_{\text{GNN}}, \vg_{\text{Lin}} \in \mathbb{R}^{C}$ are learnable per-variable gate parameters with $\sigma(\cdot)$ denoting the sigmoid. This residual formulation ensures that when gate values approach zero, xCPD degrades to the backbone prediction $\widehat{\mX}^{\text{model}}$.

\textbf{Optimization}. The training objective of xCPD comprises three components.  
First, we minimize the Mean Squared Error (MSE) loss $\boldsymbol{\mathcal{L}}_\text{MSE}$ between predictions $\widehat{\mX}^{\text{predict}}$ and the ground truth \begin{footnotesize}$\widehat{\mX}$\end{footnotesize} as:
\begin{footnotesize}
\begin{equation}
\boldsymbol{\mathcal{L}}_\text{MSE}=\frac{1}{T'}\sum_{t=1}^{T'}\|\widehat{\mX}_{:,t}^{\text{predict}}-\widehat{\mX}_{:,t}\|_{2}^{2}.
\end{equation}
\end{footnotesize}

To encourage diverse expert usage, we add two regularization terms. 
Let \begin{footnotesize}$\mR_{i,j}=\text{Softmax}(\psi(\vx_{i}))_{j}$\end{footnotesize} denote the routing probability of the expert $j$ for node $i$. 
We define the entropy loss $\mathcal{L}_\text{Entropy}$ to prevent all experts from having low confidence scores~\citep{DRMoE}, and the balance loss $\mathcal{L}_\text{Balance}$ to prevent
the model from using only a few experts repeatedly~\citep{shazeer2017moe}:
\begin{footnotesize}
\begin{equation}
\mathcal{L}_\text{Entropy}=-\frac{1}{n}\sum_{i=1}^{n}\sum_{j=1}^{3} \left(\mR_{i,j}\log(\mR_{i,j}+\delta)\right), \quad
\mathcal{L}_\text{Balance}=\frac{1}{n}\sum_{i=1}^{n}\frac{\text{Std}(\mR_{i,:})}{\text{Mean}(\mR_{i,:})+\delta},     
\end{equation}
\end{footnotesize}where $\delta$ is a small constant for numerical stability, Std($\cdot$) and Mean($\cdot$) calculate the standard deviation and mean, respectively. 
Then, the overall training objective can be formulated as follows:
\begin{footnotesize}
\begin{equation}
\mathcal{L} = \mathcal{L}_\text{MSE} + \mu \mathcal{L}_\text{Entropy} + \beta \mathcal{L}_\text{Balance},
\end{equation}
\end{footnotesize}where hyperparameters $\mu$ and $\beta$ balance the regularization terms.

\xhdr{Complexity} 
xCPD has a time complexity of $\mathcal{O}(nkd + Lnkd)$ and a space complexity of $\mathcal{O}(nd + nk)$, where $n$ is the number of nodes, $k$ is the number of neighbors, $d$ is the embedding size, and $L$ is the GNN layers. These lightweight operations ensure scalability to large multivariate time series.

\begin{table*}[h]
\centering
\caption{Long-term forecasting results on 9 real-world datasets in terms of MSE and MAE. {The lookback horizon is set 96 (512 for TSMixer) and forecasting horizons are \{96, 192, 336, 720\}}. The better performance in each setting is shown in \textbf{bold}. The best results for each row are \underline{underlined}.}
\label{long_forcast}
\resizebox{1.\linewidth}{!}{
\begin{tabular}{l|r|cccc|cccc|cccc|cccc}
\toprule
\multicolumn{2}{c|}{Model} & \multicolumn{2}{c}{TSMixer} & \multicolumn{2}{c|}{+ xCPD} & \multicolumn{2}{c}{DLinear} & \multicolumn{2}{c|}{+ xCPD} & \multicolumn{2}{c}{PatchTST} & \multicolumn{2}{c|}{+ xCPD} & \multicolumn{2}{c}{TimesNet} & \multicolumn{2}{c}{+ xCPD}\\
\multicolumn{2}{c|}{Metric} & {MSE} &  {MAE} &  {MSE} &  {MAE} & {MSE} &  {MAE} &  {MSE} &  {MAE} & {MSE} &  {MAE} &  {MSE} &  {MAE} & {MSE} &  {MAE} & MSE & MAE \\
\midrule
\multirow{4}{*}{\rotatebox[origin=c]{90}{\textbf{ETTh1}}} 
&96  
& 0.361 & 0.392 & \underline{\textbf{0.357}} & \underline{\textbf{0.388}} 
& 0.384  & 0.397  & \textbf{0.381}  & \textbf{0.392}
&0.414 &0.419 &\textbf{0.402} &\textbf{0.404} 
&0.384 &0.402 &\textbf{0.378} &\textbf{0.398} 
\\ 
&192  
& 0.404 & 0.418 & \underline{\textbf{0.389}} & \underline{\textbf{0.405}} 
& 0.433  & 0.427 & \textbf{0.430} & \textbf{0.421}
 &0.460 &0.445 &\textbf{0.443} &\textbf{0.435} 
 &0.436 &0.429 &\textbf{0.422} &\textbf{0.423} 
\\
&336  
& 0.420 & 0.431 & \underline{\textbf{0.402}} & \underline{\textbf{0.411}} 
& 0.479  & 0.456 & \textbf{0.472}  & \textbf{0.444}
 &0.501 &0.466 &\textbf{0.482} &\textbf{0.451} 
 &0.491 &0.469 &\textbf{0.478} &\textbf{0.456} 
\\
&720  
& 0.463 & 0.472 & \underline{\textbf{0.458}} & \underline{\textbf{0.461}} 
& 0.506  & 0.503  & \textbf{0.497}  & \textbf{0.485}  
 &0.500 &0.488 &\textbf{0.494} &\textbf{0.482} 
 &0.521 &0.500 &\textbf{0.509} &\textbf{0.491} 
\\ 
\midrule
\multirow{4}{*}{\rotatebox[origin=c]{90}{\textbf{ETTh2}}} 
&96  
& 0.274 & 0.341 & \underline{\textbf{0.270}} & \textbf{0.336} 
& 0.314  & 0.369  & \textbf{0.302} &  \textbf{0.357}
&0.302 &0.348 &\textbf{0.274} &\underline{\textbf{0.335}}
&0.340 &0.374 &\textbf{0.330} &\textbf{0.368} 
\\ 
&192  
& 0.339 & 0.385 & \underline{\textbf{0.321}} & \underline{\textbf{0.381}}
& 0.426  & 0.439 & \textbf{0.414} & \textbf{0.428} 
&0.388 &0.400 &\textbf{0.372} &\textbf{0.385} 
&0.402 &0.414 &\textbf{0.389} &\textbf{0.402} 
\\
&336  
& 0.361 & 0.406 & \underline{\textbf{0.358}} & \underline{\textbf{0.392}}
& 0.576  & 0.530 & \textbf{0.547} & \textbf{0.511}  
&0.426 &0.433 &\textbf{0.408} &\textbf{0.420} 
&0.452 &0.452 &\textbf{0.435} &\textbf{0.439}   
\\
&720  
& 0.445 & 0.470 & \textbf{0.430} & \textbf{0.458} 
& 0.795  & 0.641  & \textbf{0.765} & \textbf{0.623} 
&0.431 &0.446 &\underline{\textbf{0.417}} &\underline{\textbf{0.433}} &0.462 &0.468 &\textbf{0.448} &\textbf{0.455} 
\\
\midrule
\multirow{4}{*}{\rotatebox[origin=c]{90}{\textbf{ETTm1}}} 
&96  
& 0.285 & 0.339 & \underline{\textbf{0.280}} & \underline{\textbf{0.334}} 
& 0.345  & 0.373  & \textbf{0.329}  & \textbf{0.362} 
&0.329 &0.367 &\textbf{0.318} &\textbf{0.358} 
&0.338 &0.375 &\textbf{0.329} &\textbf{0.367} 
\\ 
&192  
& 0.327 & 0.365  & \underline{\textbf{0.321}} & \underline{\textbf{0.360}} 
& 0.381  & 0.391 & \textbf{0.371}  & \textbf{0.381} 
&0.367 &0.385 &\textbf{0.352} &\textbf{0.374} 
&0.374 &0.387 &\textbf{0.368} &\textbf{0.376} 
\\
&336  
& 0.356  &  0.382 & \underline{\textbf{0.354}} & \underline{\textbf{0.378}}
& 0.415  & 0.415 & \textbf{0.401} & \textbf{0.403}
&0.399 &0.410 &\textbf{0.380} &\textbf{0.394} 
&0.410 &0.411 &\textbf{0.408} &\textbf{0.410} 
\\
&720  
& 0.419 & 0.414 & \underline{\textbf{0.417}} & \underline{\textbf{0.413}} 
& 0.473  & 0.450  & \textbf{0.457} & \textbf{0.436}
&0.454 &0.439 &\textbf{0.437} &\textbf{0.428} 
&0.478 &0.450 &\textbf{0.467} &\textbf{0.441}  
\\
\midrule
\multirow{4}{*}{\rotatebox[origin=c]{90}{\textbf{ETTm2}}} 
&96  
& 0.163 & 0.252 & \underline{\textbf{0.160}} & \underline{\textbf{0.250}}  
& 0.193  & 0.292  & \textbf{0.188} & \textbf{0.284} 
&0.175 &0.259 &\textbf{0.163} &\underline{\textbf{0.250}} 
&0.187 &0.267 &\textbf{0.185} &\textbf{0.266} 
\\ 
&192  
& 0.216 & 0.290 & \underline{\textbf{0.212}} & \underline{\textbf{0.288}} 
& 0.284  & 0.361 & \textbf{0.276} & \textbf{0.351} 
&0.241 &0.302 &\textbf{0.230} &\textbf{0.297} 
&0.249 &0.309 &\textbf{0.245} &\textbf{0.303}  
\\
&336  
& 0.268 & \underline{\textbf{0.324}} & \underline{\textbf{0.266}} & \underline{\textbf{0.324}}  
& 0.384  & 0.429 & \textbf{0.376} & \textbf{0.418}  
&0.305 &0.343 &\textbf{0.281} &\textbf{0.334} 
&0.321 &0.351 &\textbf{0.313} &\textbf{0.344}  
\\
&720  
& 0.420 & 0.422 & \underline{\textbf{0.366}}  & \underline{\textbf{0.387}} 
& 0.556  & 0.523  & \textbf{0.522}  & \textbf{0.501}  
 &0.402 &0.400 &\textbf{0.388} &\textbf{0.390} 
 &0.408 &0.403 &\textbf{0.388} &\textbf{0.385}  
\\
\midrule
\multirow{4}{*}{\rotatebox[origin=c]{90}{\textbf{Exchange}}} 
&96  
& 0.089 & 0.209 & \underline{\textbf{0.083}} & \textbf{0.203} 
& 0.083  & 0.206  & \textbf{0.081} & \textbf{0.202} 
&0.088 &0.205 &\underline{\textbf{0.083}} &\underline{\textbf{0.201}} 
&0.107 &0.234 &\textbf{0.103} &\textbf{0.228}  
\\ 
&192  
& 0.195 & 0.315 & \textbf{0.173} & \textbf{0.296} 
& 0.153  & 0.286 & \textbf{0.150} & \textbf{0.283} 
&0.176 &0.299 &\underline{\textbf{0.168}} &\underline{\textbf{0.294}} 
&0.226 &0.344 &\textbf{0.221} &\textbf{0.335} 
\\
&336  
& 0.343  & 0.421 & \textbf{0.304} & \textbf{0.399} 
& 0.289  & 0.404 & \textbf{0.266} & \textbf{0.386}  
&0.301 &0.397 &\underline{\textbf{0.297}} &\underline{\textbf{0.392}} 
&0.367 &0.448 &\textbf{0.356} &\textbf{0.439}    
\\
&720  
& 0.898 & 0.710 & \textbf{0.835} & \textbf{0.685} 
& 0.793  & 0.671  & \underline{\textbf{0.751}} & \underline{\textbf{0.651}} 
&0.901 &0.714 &\textbf{0.885} &\textbf{0.702} 
&0.964 &0.746 &\textbf{0.948} &\textbf{0.725} 
\\
\midrule
\multirow{4}{*}{\rotatebox[origin=c]{90}{\textbf{Solar-Eng.}}} 
&96  
& 0.201 & 0.235 & \underline{\textbf{0.196}} & \underline{\textbf{0.230}} 
& 0.290  & 0.378  & \textbf{0.284} & \textbf{0.372} 
&0.234 &0.286 &\textbf{0.229} &\textbf{0.280} 
&0.250 &0.292 &\textbf{0.244} &\textbf{0.287} 
\\ 
&192  
& 0.231 & 0.258 & \underline{\textbf{0.227}} & \underline{\textbf{0.259}} 
& 0.320  & 0.398 &\textbf{0.312} & \textbf{0.389} 
&0.267 &0.310 &\textbf{0.260} &\textbf{0.302} 
&0.296 &0.318 &\textbf{0.289} &\textbf{0.314}  
\\
&336  
& 0.246 & 0.271 & \underline{\textbf{0.245}} & \underline{\textbf{0.277}} 
& 0.353  & 0.415 & \textbf{0.344}  & \textbf{0.406}  
&0.290 &0.315 &\textbf{0.281} &\textbf{0.307} 
&0.319 &0.330 &\textbf{0.313} &\textbf{0.322} 
\\
&720  
& 0.247 & 0.273 & \underline{\textbf{0.245}} & \underline{\textbf{0.272}} 
& 0.356  & 0.413  & \textbf{0.348} & \textbf{0.407} 
&0.289 &0.317 &\textbf{0.283} &\textbf{0.312} 
&0.338 &0.337 &\textbf{0.328} &\textbf{0.330}  
\\\midrule
\multirow{4}{*}{\rotatebox[origin=c]{90}{\textbf{Weather}}} 
&96  
& 0.149 & 0.198 & \underline{\textbf{0.143}} & \underline{\textbf{0.192}} 
& 0.194  & 0.252  & \textbf{0.177}  & \textbf{0.239} 
&0.177 &\textbf{0.218} &\textbf{0.167} &\textbf{0.218} 
&0.172 &0.220 &\textbf{0.164} &\textbf{0.212} 
\\ 
&192  
& 0.201 & 0.248 & \underline{\textbf{0.190}} & \underline{\textbf{0.240}} 
& 0.238  & 0.299 & \textbf{0.227}  & \textbf{0.280} 
&0.225 &0.259 &\textbf{0.218} &\textbf{0.242} 
&0.219 &0.261 &\textbf{0.209} &\textbf{0.250}   
\\
&336  
& 0.264  & 0.291 & \underline{\textbf{0.240}} & \underline{\textbf{0.277}} 
& 0.281  & 0.330 & \textbf{0.272} & \textbf{0.317}  
&0.278 &0.297 &\textbf{0.270} &\textbf{0.288} 
&0.280 &0.306 &\textbf{0.270} &\textbf{0.285} 
\\
&720  
& 0.320  & 0.336  & \underline{\textbf{0.313}} & \underline{\textbf{0.330}} 
& 0.345  & 0.381  & \textbf{0.337} & \textbf{0.370} 
&0.354 &0.348 &\textbf{0.338} &\textbf{0.341} 
&0.365 &0.359 &\textbf{0.361} &\textbf{0.357} 
\\
\midrule
\multirow{4}{*}{\rotatebox[origin=c]{90}{\textbf{Electricity}}} 
&96  
& 0.142  & 0.237  & \underline{\textbf{0.135}} & \underline{\textbf{0.230}} 
& 0.194  & 0.277  & \textbf{0.174} & \textbf{0.262} 
& 0.181 & 0.273  &\textbf{0.170} &\textbf{0.261} 
&0.168 &0.272 &\textbf{0.153} &\textbf{0.254} 
\\ 
&192  
& 0.154 & 0.248 & \underline{\textbf{0.144}}  & \underline{\textbf{0.242}} 
& 0.194  & 0.281 & \textbf{0.180}  & \textbf{0.268}  
& 0.188 & 0.279 &\textbf{0.176} &\textbf{0.265} 
&0.184 &0.289 &\textbf{0.170} &\textbf{0.258}  
\\
&336  
& 0.163 & 0.264 & \underline{\textbf{0.156}} & \underline{\textbf{0.257}} 
& 0.207  & 0.296 & \textbf{0.195} & \textbf{0.293}  
& 0.204 & 0.296 &\textbf{0.196} &\textbf{0.287}   
&0.198 &0.300 &\textbf{0.178} &\textbf{0.280} 
\\
&720  
& 0.208 & 0.300 & \underline{\textbf{0.200}} & \underline{\textbf{0.294}}
& 0.242  & 0.330  & \textbf{0.240} & \textbf{0.326}  
& 0.246 & 0.328 &\textbf{0.233} &\textbf{0.308} 
&0.220 &0.320 &\textbf{0.198} &\textbf{0.303}
\\
\midrule
\multirow{4}{*}{\rotatebox[origin=c]{90}{\textbf{Traffic}}} 
&96  
& 0.376 & 0.264 & \underline{\textbf{0.371}} & \underline{\textbf{0.259}} 
& 0.650  & 0.397  & \textbf{0.636}  & \textbf{0.386} 
&0.462 &0.298 &\textbf{0.448} &\textbf{0.278} 
&0.593 &0.321 &\textbf{0.549} &\textbf{0.303} 
\\ 
&192  
& 0.397 & 0.277 & \underline{\textbf{0.358}} & \underline{\textbf{0.275}} 
& 0.599  & 0.371 & \textbf{0.574} & \textbf{0.356} 
&0.468 &0.300 &\textbf{0.453} &\textbf{0.291} 
&0.617 &0.336 &\textbf{0.548} &\textbf{0.325}   
\\
&336  
& 0.413 & 0.290 & \underline{\textbf{0.410}} & \underline{\textbf{0.287}} 
& 0.605  & 0.374 & \textbf{0.586} & \textbf{0.362}  
&0.482 &0.307 &\textbf{0.464} &\textbf{0.295} 
&0.629 &0.336 &\textbf{0.561} &\textbf{0.332}
\\
&720  
& 0.444 & 0.306 & \underline{\textbf{0.439}} & \underline{\textbf{0.303}} 
& 0.645  & 0.395  & \textbf{0.630} & \textbf{0.387} 
&0.517 &0.326 &\textbf{0.502} &\textbf{0.305} 
&0.640 &0.350 &\textbf{0.573} &\textbf{0.344}
\\
\bottomrule
\end{tabular}}
\end{table*}

\section{Experiments}
\label{exp_setup}

\xhdr{Benchmarking Datasets}
For long-term forecasting, we experiment with 9 popular benchmarks in various domains~\citep{Informer, Autoformer, exchange}, including weather, electricity, and traffic. 
For short-term forecasting, we use two univariate datasets: M4~\citep{makridakis2018m4}, covering diverse domains and frequencies, and Stock~\citep{CCM}, containing 1,390 stock price series over 10 years with high variability and non-stationarity. Details are provided in Appendix~\ref{appd:dataset}.

\xhdr{Backbones and Baselines} xCPD is a model-agnostic strategy that can be seamlessly integrated into arbitrary forecasting models to improve their performance. 
To ensure comprehensive evaluation, we select four recent state-of-the-art forecasting backbones: two Channel-Independent (CI) models—DLinear~\citep{dlinear} and PatchTST~\citep{patchtst}—and two Channel-Dependent (CD) models—TSMixer~\citep{chen2023tsmixer} and TimesNet~\citep{timesnet}. These backbones span three representative architectural paradigms: linear, transformer-based, and convolution-based models. 
In addition, we compare xCPD against four best-performing methods: LIFT~\citep{LIFT}, CCM~\citep{CCM},  PCD~\citep{lee2025partial}, and PRReg~\citep{han2024capacity}. {CCM and PCD are CP-based baselines.} 
We exclude PCD and PRReg from some experiments, since PCD is transformer-specific and PRReg requires specific optimization. More details refer to Appendix~\ref{PCD-PRReg}.

\xhdr{Implementation Details} 
For fair evaluation, we adopt Time-Series-Library~\citep{timesnet} to reproduce all baselines and report their performance with optimal hyperparameters. All experiments are conducted using PyTorch on a single NVIDIA RTX A100 80GB GPU. 
We use Adam optimizer~\citep{kingma2014adam} with learning rates selected from \{5e-4, 1e-4\}. 
The reported results are averaged over five runs with different random seeds. 
For dynamic expert allocation, the threshold is selected from \{0.0, 0.5\}. Experimental setups and hyperparameters are provided in Appendix~\ref{appd:exp_details}.

\subsection{Long-Term Forecasting Performance}\label{sec:long-term}
As shown in Table~\ref{long_forcast}, we evaluate long-term time series forecasting performance using mean squared error (MSE) and mean absolute error (MAE) in 9 real-world datasets.  
The results demonstrate that integrating xCPD consistently improves predictive accuracy over the original backbone models on most datasets and forecasting horizons, encompassing 144 experimental settings in total. The improvements are most pronounced in datasets with complex, high-dimensional channel interactions. Specifically, xCPD achieves great MSE reductions on the electricity dataset (321 variables) and on the traffic dataset (862 variables), averaged over 4 backbones (with additional backbones evaluated in Appendix~\ref{Additional-backbones}). These gains stem from xCPD's ability to (i) model fine-grained channel-patch dependencies through ego-graph representations and (ii) suppress irrelevant connections via spectral filtering, thereby improving inter-channel relationship modeling. Since DLinear and PatchTST report results with lookback horizon 336, we provide the same setting in Appendix~\ref{app:extended} to further demonstrate xCPD's effectiveness. {We provide more performance analysis in Appendix~\ref{app:analysis}.}

As shown in Table~\ref{long_forcast_compare}, we further compare xCPD with two best-performing baselines: LIFT~\citep{LIFT} and CCM~\citep{CCM}. LIFT improves forecasting by identifying leading indicators and leveraging their signals to guide lagging channels. 
CCM dynamically clusters channels based on similarity, balancing between CI and CD strategies. 
xCPD consistently outperforms both baselines by modeling fine-grained channel-patch dependencies and explicitly incorporating frequency-domain dependencies. Notably, xCPD maintains its performance advantage even under non-stationary conditions in Appendix~\ref{robustness-dynamics}, showing the robustness of our shared basis in Theorem~\ref{theorem1}.\\

\vspace{-0.45cm}
\begin{table*}[h]
\centering
\caption{Averaged long-term forecasting results with horizons $\{96, 192, 336, 720\}$ and look-back horizon are 512 for TSMixer and 96 for DLinear on 9 datasets in terms of MSE and MAE. The better performance is shown in \textbf{bold}.}
\label{long_forcast_compare}
\vspace{-0.3cm}
\resizebox{1\linewidth}{!}{
\begin{tabular}{l|cccccccc|cccccccc}
\toprule
\multicolumn{1}{c|}{Model} & \multicolumn{2}{c}{TSMixer} & \multicolumn{2}{c}{+ LIFT} & \multicolumn{2}{c}{+ CCM} & \multicolumn{2}{c}{+ xCPD} & \multicolumn{2}{c}{DLinear} & \multicolumn{2}{c}{+ LIFT} & \multicolumn{2}{c}{+ CCM} & \multicolumn{2}{c}{+ xCPD} \\ 
\multicolumn{1}{c|}{Metric} & {MSE} &  {MAE} &  {MSE} &  {MAE} & {MSE} &  {MAE} &  {MSE} &  {MAE} & {MSE} &  {MAE} &  {MSE} &  {MAE} & {MSE} & {MAE} & MSE & MAE  \\
\midrule
\multirow{1}{*}{\textbf{ETTh1}} 
 & 0.412 & 0.428 &0.407 &0.424 &0.413 &0.428 &\underline{\textbf{0.401}} &\underline{\textbf{0.416}}   &0.456 &0.452 &0.451 &0.448 &0.451 &0.451 &\textbf{0.445} &\textbf{0.435}    \\ 
\multirow{1}{*}{{\textbf{ETTh2}}} 
 &0.355 &0.401  &0.351 &0.397 &0.351 &0.399  &\underline{\textbf{0.345}} &\underline{\textbf{0.392}}   &0.559 &0.515  &0.553 &0.511 &0.552 &0.511  &\textbf{0.507} &\textbf{0.479}  \\ 
\multirow{1}{*}{{\textbf{ETTm1}}} 
 &0.347 &0.375  &0.342 &0.372 &0.351 &0.380  &\underline{\textbf{0.343}} &\underline{\textbf{0.371}}   &0.403 &0.407  &0.399 &0.404 &0.397 &0.403  &\textbf{0.389} &\textbf{0.395}  \\ 
\multirow{1}{*}{{\textbf{ETTm2}}} 
 &0.267 &0.322  &0.264 &0.319 &0.258 &0.319  &\underline{\textbf{0.251}} &\underline{\textbf{0.312}}   &0.350 &0.401 &0.346 &0.398 &{\textbf{0.306}} &{\textbf{0.369}}  &{0.340} &{0.388}  \\ 
\multirow{1}{*}{{\textbf{Exchange}}} 
 &0.381 &0.414  &0.378 &0.411 &0.355 &0.402  &\textbf{0.349} &\textbf{0.396}   &0.354 &0.414 &0.351 &0.411 &0.348 &0.409  &\underline{\textbf{0.312}} &\underline{\textbf{0.380}}  \\ 
\multirow{1}{*}{{\textbf{Solar-Eng.}}} 
 &0.231 &0.259  &0.229 &0.257 &0.234 &0.263  &\underline{\textbf{0.228}} &\underline{\textbf{0.257}}   &0.330 &0.401  &0.327 &0.398 &0.327 &0.399  &\textbf{0.322} &\textbf{0.393}  \\ 
\multirow{1}{*}{{\textbf{Weather}}} 
 &0.234 &0.268  &0.231 &0.266 &0.225 &0.263  &\underline{\textbf{0.221}} &\underline{\textbf{0.259}}   &0.265 &0.317  &0.262 &0.314 &0.262 &0.309  &\textbf{0.253} &\textbf{0.301}  \\ 
\multirow{1}{*}{{\textbf{Electricity}}} 
 &0.167 &0.262  &0.165 &0.260 &0.163 &0.261  &\underline{\textbf{0.158}} &\underline{\textbf{0.255}}   &0.212 &0.300  &0.208 &0.297 &0.199 &0.293  &\textbf{0.197} &\textbf{0.287}  \\ 
\multirow{1}{*}{{\textbf{Traffic}}} 
 &0.408 &0.284  &0.405 &0.282 &0.396 &0.284  &\underline{\textbf{0.394}} &\underline{\textbf{0.281}}   &0.625 &0.383  &0.620 &0.380 &0.614 &0.378  &\textbf{0.606} &\textbf{0.372}  \\ 
\bottomrule
\end{tabular}}
\end{table*}

\begin{wraptable}{r}{0.6\linewidth}
\vspace{-0.5cm} 
    \caption{Comparison of baselines under a general  setting.}\label{tab:prreg_comparison}
    \vspace{-0.3cm}
    \resizebox{\linewidth}{!}{
    \begin{tabular}{llccccccc}
    \toprule
     &  & \multicolumn{1}{c}{CD} & \multicolumn{1}{c}{CI} & \multicolumn{1}{c}{+PRReg}  & \multicolumn{1}{c}{+LIFT} & \multicolumn{1}{c}{+PCD} & \multicolumn{1}{c}{+CCM} & \multicolumn{1}{c}{+xCPD} \\
     \midrule
    \multirow{2}{*}{ETTh1} & Linear & 0.402 & 0.345 & 0.342  & 0.345 & \textit{None} & 0.342 & \textbf{0.338} \\
    & Transformer & 0.861 & 0.655 & 0.539  &0.528  &0.621 & {0.518}  &\textbf{0.512}\\ \midrule
    \multirow{2}{*}{ETTm1} & Linear & 0.404 & 0.354 & 0.311 &0.316 &\textit{None} & {0.310} &\textbf{0.305} \\
     & Transformer & 0.458 & 0.379 & 0.349 &0.356 &0.404 & {0.300} & \textbf{0.289} \\ \midrule
     \multirow{2}{*}{Exchange} & Linear & 0.051 & 0.119 & 0.048 &0.050 &\textit{None}  & {0.045} &\textbf{0.042} \\
     & Transformer & 0.101 & 0.511 & 0.100  &0.099 &0.233 & {0.097} & \textbf{0.094}\\ \midrule
     \multirow{2}{*}{Weather} & Linear & 0.142 & 0.169 & 0.131 &0.133 &\textit{None}  & {0.130} &\textbf{0.128} \\
     & Transformer & 0.251 & 0.168 & 0.180  &0.178 &0.198 & {0.164} & \textbf{0.161}\\ \midrule
    \multirow{2}{*}{Electricity} & Linear & {0.195}& 0.196 & 0.196 &0.196 &\textit{None}   & {0.195} &\textbf{0.192} \\
     & Transformer & 0.250 & 0.185 & 0.185 &0.188 &0.215  & {0.183} &\textbf{0.181}\\
     \bottomrule
    \end{tabular}}\vspace{-0.2cm}
\end{wraptable}
In Table~\ref{tab:prreg_comparison}, we evaluate xCPD against baselines in a unified training setting. Following PRReg~\citep{han2024capacity}, we reproduce two backbones (Linear and Transformer) with both CI and CD variants, where baselines are incorporated as weighted regularization terms added to the primary forecasting loss. 
Table~\ref{tab:prreg_comparison} shows xCPD consistently outperforms competing baselines across all settings, demonstrating its superior adaptability to diverse forecasting architectures.

\subsection{Short-Term Forecasting Performance}\label{sec:short-term}

\vspace{-6pt}
We evaluate xCPD on univariate short-term forecasting using the M4 and Stock datasets. 
As shown in Table~\ref{tab:m4-result}, xCPD consistently outperforms baselines through principled frequency-domain modeling via shared Fourier bases, spectral grouping, and adaptive filtering. Performance gains exhibit clear horizon dependency: long-term forecasting benefits substantially from frequency decomposition of seasonal patterns, while short-term predictions show modest improvements due to limited spectral diversity in local temporal dynamics. Despite this inherent constraint, xCPD maintains consistent gains through patch-level denoising, demonstrating its generalizability across temporal scales.
\vspace{-0.2cm}
\begin{table*}[h]
\centering
\caption{Short-term forecasting results on M4 dataset in terms of SMAPE, MASE, and OWA, and Stock dataset in terms of MSE and MAE. The forecasting horizon is $\{7, 24\}$ for the Stock.}\label{tab:m4-result}
\vspace{-0.3cm}
\resizebox{1\linewidth}{!}{
\begin{tabular}{ll|cccc|cccc|cccc|cccc}
\toprule
\multicolumn{2}{l}{Model}        & \multicolumn{1}{|l}{TSMixer} & \multicolumn{1}{l}{+ LIFT} & \multicolumn{1}{l}{+ CCM} & \multicolumn{1}{l}{+ xCPD} & \multicolumn{1}{|l}{DLinear} & \multicolumn{1}{l}{+ LIFT} & \multicolumn{1}{l}{+ CCM} & \multicolumn{1}{l}{+ xCPD} & \multicolumn{1}{|l}{PatchTST} & \multicolumn{1}{l}{+ LIFT} & \multicolumn{1}{l}{+ CCM} & \multicolumn{1}{l}{+ xCPD} & \multicolumn{1}{|l}{TimesNet}  & \multicolumn{1}{l}{+ LIFT} & \multicolumn{1}{l}{+ CCM} & \multicolumn{1}{l}{+ xCPD} \\ \midrule
\multirow{3}{*}{M4 (Yearly)}   & SMAPE & 14.70   &14.88   & {14.67}  & \textbf{14.56}        & 16.96  &15.93 &{14.33} & \textbf{14.12}       & 13.47   &13.35 & {13.30}  &\textbf{13.28}      & 15.37   &15.12 & {14.42}  &\textbf{14.36}      \\
& MASE  & {3.343} &3.341 & 3.370     & \textbf{3.221}       & 4.283 &4.038 & {3.144} &\textbf{3.121}        & 3.019 &3.006  & {2.997} &\textbf{2.989}     & 3.554 &3.512 & {3.448} &\textbf{3.434}  \\
& OWA   & 0.875 &0.875 & {0.873} & \textbf{0.870}    & 1.058 &0.937  & {0.834}   &\textbf{0.829}        & 0.792  &0.788  & {0.781} &\textbf{0.778}   & 0.918 &0.864 & {0.802}   &\textbf{0.795}                   \\ \midrule
\multirow{3}{*}{M4 (Quarterly)} & SMAPE & 11.18 &11.02 & {10.98}  &\textbf{10.88}         & 12.14 &11.48 & {10.51}  &\textbf{10.49}         & 10.38 &10.36  & {10.35}  &\textbf{10.35}         & 10.46 &10.28 & {10.12}   &\textbf{10.10}           \\
& MASE  & 1.346 &1.338  & {1.332} & \textbf{1.326}  & 1.520 &1.482  & {1.243}  &\textbf{1.224}  & 1.233 &1.228   & {1.224}   & \textbf{1.218 } & 1.227 &1.221  & {1.183} & \textbf{1.176}     \\
& OWA   & 0.998 &0.992  & {0.984} & \textbf{0.976}  & 1.106 &1.098  & {0.931}  & \textbf{0.928}  & 0.921  &0.918  & {0.915}   & \textbf{0.912}   & 0.923 &0.913  & {0.897}  & \textbf{0.889}   \\ \midrule
\multirow{3}{*}{M4 (Monthly)}  & SMAPE & 13.43 &13.42 & {13.40}  & \textbf{13.40}         & 13.51 &12.48 & {13.37}   &\textbf{13.36}        & 12.95  &12.90 & {12.67}    & \textbf{12.66}       & 13.51 &13.38 & {12.79}    &\textbf{12.66}         \\
& MASE  & 1.022 &1.020  & {1.019} & \textbf{1.014}  & 1.037 &1.021  & {1.005} &\textbf{1.001} & 0.970 &0.952   & {0.941} &\textbf{0.937}  & 1.039  &1.022 & {0.942} &\textbf{0.934}   \\
& OWA   & 0.946 &0.945  & {0.944} & \textbf{0.935}  & 0.956 &0.948  & {0.936} & \textbf{0.931}  & 0.905  &0.900  & {0.895} &\textbf{0.892}  & 0.957  &0.931 & {0.891} &\textbf{0.883}    \\ \midrule
\multirow{3}{*}{M4 (Others)}   & SMAPE & {7.067}  & 7.055 & 7.178  &\textbf{7.045}         & 6.709 &6.354  & {6.160} &\textbf{6.145} & 4.952  & 4.855 & {4.643} &\textbf{4.633}  & 6.913  &6.599 & {5.218} &\textbf{5.118}          \\
& MASE  & 5.587 &5.482  & {5.302} &\textbf{5.288}  & 4.953 &4.899  & {4.713} &\textbf{4.616}  & 3.347  &3.323  & {3.128} &\textbf{3.112}  & 4.507 &4.344  & {3.892} &\textbf{3.854}     \\
& OWA   & 1.642 &1.580  & {1.536} &\textbf{1.522}  & 1.487 &1.443  & {1.389} &\textbf{1.378}  & 1.049  &1.022  & {0.997} &\textbf{0.988} & 1.438 &1.359  & {1.217} &\textbf{1.208}    \\ \midrule
\multirow{3}{*}{M4 (Avg.)}      & SMAPE & 12.86 & 12.83 & {12.80}  &\textbf{12.79}         & 13.63 & 13.42 & {12.54}   & \textbf{12.43 }       & 12.05  & 11.89 & {11.85}  &\textbf{11.81}         & 12.88 & 12.56 & {11.91}    &\textbf{11.73 }       \\
& MASE  & 1.887  &1.879 & {1.864} & \textbf{1.860}  & 2.095 & 1.982 & {1.740} &\textbf{1.733}  & 1.623  & 1.604  & {1.587} &\textbf{1.572} & 1.836 & 1.799  & {1.603} &\textbf{1.592}           \\
& OWA   & 0.957 &0.950  & {0.948} &\textbf{0.923}  & 1.051 & 1.002  & {0.917} &\textbf{0.905}  & 0.869  &0.459  & {0.840} &\textbf{0.831}  & 0.955 & 0.934 & {0.894} &\textbf{0.885}        \\
\midrule
\multirow{2}{*}{Stock (Horizon 7)}  & MSE & 0.939 & 0.938 & {0.938} &\textbf{0.932} & 0.992 &0.963 & {0.883} &\textbf{0.879} & 0.896 &0.984 & {0.892} &\textbf{0.890} & 0.930 &0.922 & {0.915} &\textbf{0.912} \\
& MAE  & 0.807 & 0.807 & {0.806} &\textbf{0.800} & 0.831 & 0.821 & {0.774} &\textbf{0.768} & {0.771} & 0.773 & {0.771} & \textbf{0.770}  & 0.802 & 0.805 & {0.793} & \textbf{0.789}  \\\midrule
\multirow{2}{*}{Stock (Horizon 24)}  & MSE & 1.007 & 1.005 & {0.991} &\textbf{0.986} & 0.996 & 0.968 & {0.917} &\textbf{0.912} & 0.930 & 0.912 & {0.880} & \textbf{0.878} & 0.998 &0.958 & {0.937} & \textbf{0.933} \\
 & MAE  & 0.829 &0.821 & {0.817} &\textbf{0.805} & 0.832 &0.810  & {0.781} &\textbf{0.774} & 0.789 & 0.779 & {0.765} &\textbf{0.752} & 0.830 &0.815 & {0.789} & \textbf{0.775} \\
\bottomrule 
\end{tabular}}
\vspace{-0.5cm}
\end{table*}

\subsection{Zero-Shot Forecasting Performance}\label{sec:zero-shot}

\vspace{-6pt}
Most MTSF methods are coupled to specific datasets, limiting generalization to unseen scenarios.
In contrast, xCPD leverages learned frequency-aware filters to capture universal spectral structures, enabling effective knowledge transfer for zero-shot forecasting without retraining. 
Following prior work~\citep{timllm}, we evaluate zero-shot performance on the ETT benchmark comprising four datasets with varying regional and temporal resolutions. 
As shown in Table~\ref{tab:zero-shot} and Table~\ref{tab-append:full-zero-shot}, xCPD consistently improves zero-shot performance across all 48 evaluation settings. Notably, (i) performance gains increase with horizon length, demonstrating xCPD's ability to transfer frequency knowledge to more challenging long-range dependencies; and (ii) CI-based models benefit more substantially, with 12.0\% and 15.2\% average improvements for DLinear and PatchTST versus 6.7\% and 11.1\% for TSMixer and TimesNet. 
These results indicate promising transferability under frequency-aligned settings. Broader cross-domain transfer is left for future work.

\vspace{-0.2cm}
\begin{table*}[h]
\centering
\caption{Zero-shot forecasting MSE results on ETT datasets. The forecasting horizon is $\{96, 720\}$.}\label{tab:zero-shot}
\vspace{-0.3cm}
\resizebox{1\linewidth}{!}{
\begin{tabular}{c|ccc|ccc|ccc|ccc|c}\toprule
\multirow{1}{*}{Model} & \multicolumn{1}{c}{TSMixer} & \multicolumn{1}{c}{+ CCM} & \multicolumn{1}{c|}{+ xCPD}  & \multicolumn{1}{c}{DLinear} & \multicolumn{1}{c}{+ CCM} & \multicolumn{1}{c|}{+ xCPD} & \multicolumn{1}{c}{PatchTST} & \multicolumn{1}{c}{+ CCM} & \multicolumn{1}{c|}{+ xCPD}  & \multicolumn{1}{c}{TimesNet}  & \multicolumn{1}{c}{+ CCM} & \multicolumn{1}{c}{+ xCPD} &\multirow{1}{*}{{\textbf{IMP(\%)}}}\\
    \midrule
\multirow{2}{*}{ h1$\rightarrow$h2} 
    & 0.288  &{{0.283}}  &\textbf{0.280}         &0.358  &0.344  &\textbf{0.335}     &0.442 &0.438  &\textbf{0.418}       & 0.391  &{0.388}  &\textbf{0.384} &2.318  \\ 
& 0.374  &{{0.370}} &\textbf{0.366}     &0.855  &0.843  &\textbf{0.822}    &0.582  &0.571  &\textbf{0.544}       & 0.540  & {0.516}  & \textbf{0.510} &2.366 \\ \midrule
\multirow{2}{*}{ h1$\rightarrow$m1} 
& 0.763  &  {0.710}  & \textbf{0.702}          &0.782  &0.723  &\textbf{0.702}     &0.754  &0.692  &\textbf{0.678}      & 0.887  & {0.827}  & \textbf{0.801}   &2.300\\ 
& 1.252  &  {1.215}  & \textbf{1.198}    &1.243 &1.224  &\textbf{1.212}     &1.892  &1.135  &\textbf{1.128}      & 1.623  &  {1.601}  & \textbf{1.589 } &0.937 \\ \midrule
\multirow{2}{*}{ h1$\rightarrow$m2} 
& 0.959  &  {0.937}  &\textbf{0.928}       &1.213  &0.912  &\textbf{0.890}      &1.132  &0.903  &\textbf{0.892}     &1.199  &  {1.122}  &\textbf{1.086}  &1.950\\ 
& 1.765  &  {1.758}  & \textbf{1.712}    &2.132 &1.732  &\textbf{1.680}     &2.015 &1.732  &\textbf{1.712}   & 2.204  &  {1.874}  &\textbf{1.835}  &2.214\\ \midrule
\multirow{2}{*}{ h2$\rightarrow$h1} 
& 0.466  &  {0.455}   &\textbf{0.443}      &0.489  &0.445  &\textbf{0.423}      &0.632  &0.532 &\textbf{0.512}     & 0.869  & {0.752}  &\textbf{0.731}  &3.533 \\ 
& 0.695  &  {0.540}  & \textbf{0.528}     &0.533  &0.495  &\textbf{0.492}      &1.032 &0.986  &\textbf{0.944}   & 1.274  &  {0.845}   &\textbf{0.834}  &2.098\\ \midrule
\multirow{2}{*}{ h2$\rightarrow$m2} 
& 0.943  &  {0.876}  &\textbf{0.868}     &0.758  &0.732  &\textbf{0.708}     &0.855  &0.781  &\textbf{0.772}      & 1.250 &   {1.064}  &\textbf{1.033}  &2.064 \\ 
& 1.472  &  {1.464}  &\textbf{1.422}     &1.892  &1.422  &\textbf{1.274}     &1.899  &1.644  &\textbf{1.542}      &1.861  &  {1.671} &\textbf{1.633}  &5.439 \\ 
\midrule
\multirow{2}{*}{ h2$\rightarrow$m1} 
& 1.254  &  {1.073}  & \textbf{1.022}     &1.243  &0.939  &\textbf{0.898}    &1.012  &0.809  &\textbf{0.794}     & 1.049  &  {0.804}  & \textbf{0.794} &3.054\\ 
& 2.275  &  {1.754} & \textbf{1.721}    &2.012  &1.850 &\textbf{1.792}  &2.732 &1.832 &\textbf{1.732}    &2.183 &  {1.742} & \textbf{1.712}  &3.049 \\
    \bottomrule
\end{tabular}}
\end{table*}

\subsection{Efficiency Evaluation}

\begin{wrapfigure}{R}{0.55\textwidth}
\vspace{-0.6cm}
\centering
\hspace{-8pt} 
\includegraphics[scale=0.24]{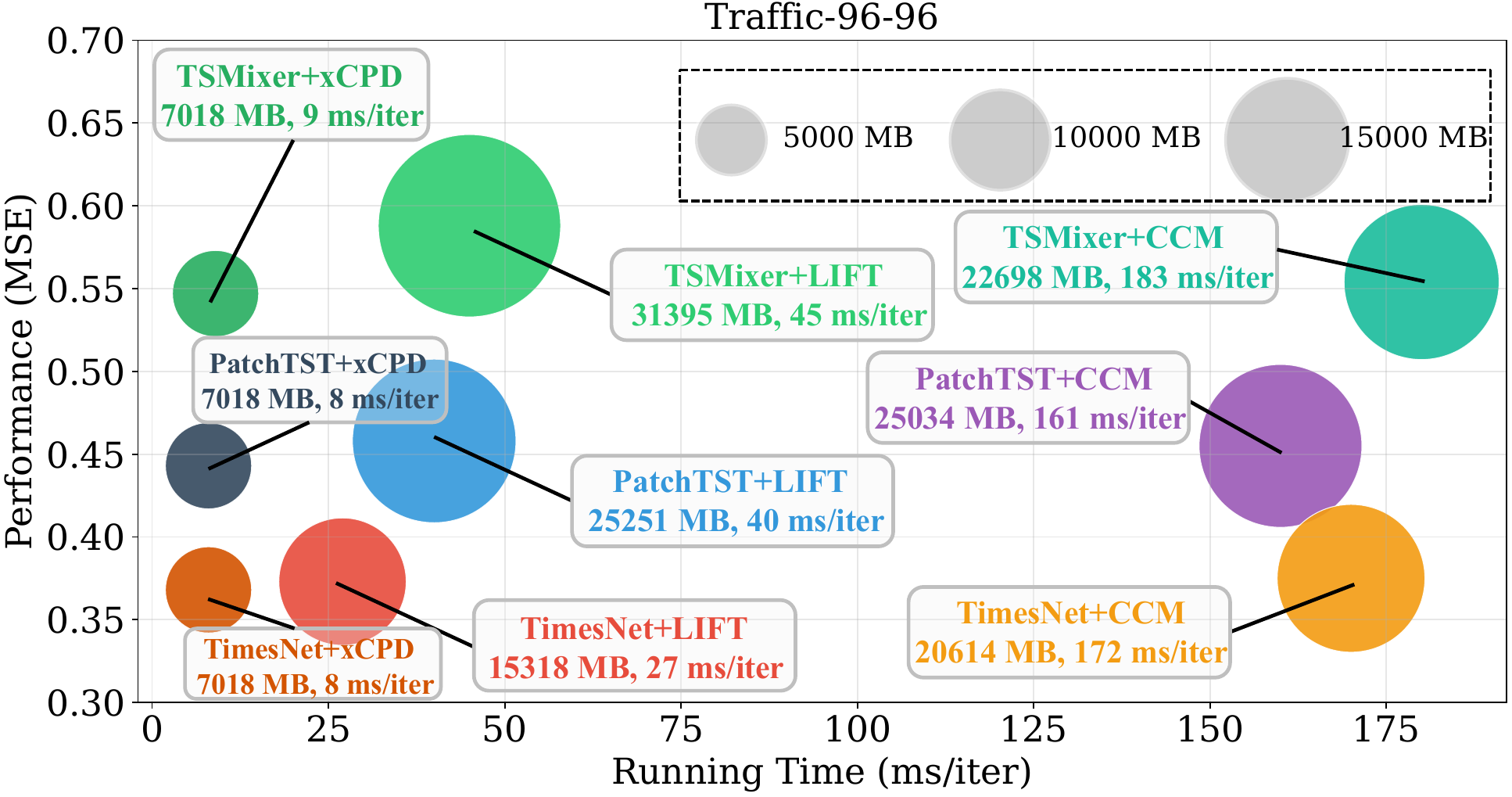}
\vspace{-0.7cm}
\caption{Model efficiency comparison on Traffic.}\vspace{-0.3cm}
\label{fig:efficiency}
\end{wrapfigure}
Figure~\ref{fig:efficiency} evaluates the computational efficiency on Traffic under the input-96-predict-96 setting. 
The overhead of LIFT and CCM is highly model-dependent, increasing substantially with large forecasting models. 
While CCM achieves strong performance, its quadratic cost with respect to the number of clusters limits scalability. In contrast, xCPD maintains model-agnostic efficiency through linear-complexity spectral operations, achieving comparable performance with significantly lower overhead. 
Table~\ref{tab:efficiency} in Appendix~\ref{Computation-Comparison} further shows that the time overhead introduced by xCPD accounts for only 9\%–11\% of the backbone model’s training cost. 
These efficiency advantages make xCPD particularly suitable for large-scale and real-time forecasting applications. 

\begin{table*}[h!]
\centering
\caption{Ablation study of xCPD. Look-back horizon is 96 and results are averaged over forecasting horizons $\{96, 192, 336, 720\}$. Best and second-best results are marked in \textbf{bold} and \underline{underline}.}\vspace{-0.3cm}
\label{tab:ablation}
\scriptsize
\setlength{\tabcolsep}{3.5pt}
\newcommand{\best}[1]{\textbf{#1}}
\newcommand{\second}[1]{\underline{#1}}
\begin{tabular}{c|l|cc|cc|cc|cc|cc|cc|cc}
\toprule
\multicolumn{2}{c|}{\multirow{2}{*}{Variants}} & \multicolumn{2}{c|}{ETTh1} & \multicolumn{2}{c|}{ETTm1} & \multicolumn{2}{c|}{Exchange} & \multicolumn{2}{c|}{Solar-Eng.} & \multicolumn{2}{c|}{Weather} & \multicolumn{2}{c|}{Electricity} & \multicolumn{2}{c}{Traffic} \\
\cmidrule{3-16}
\multicolumn{2}{c|}{} & MSE & MAE & MSE & MAE & MSE & MAE & MSE & MAE & MSE & MAE & MSE & MAE & MSE & MAE \\
\midrule
\multicolumn{2}{c|}{DLinear+xCPD} 
& \best{0.445} & \best{0.435} 
& \best{0.389} & \best{0.395} 
& \best{0.312} & \best{0.380} 
& \best{0.322} & \best{0.393} 
& \best{0.253} & \best{0.301} 
& \best{0.197} & \best{0.287} 
& \best{0.606} & \best{0.372} \\
\midrule
\multirow{5}{*}{\rotatebox[origin=c]{90}{{\centering Remove}}} 
& w/o Shared Basis 
& 0.462 & 0.456 & 0.404 & 0.415 
& 0.331 & 0.395 & 0.341 & 0.399 
& 0.267 & 0.310 & 0.205 & 0.295 
& 0.618 & 0.379 \\
& w/o Freq. Partition 
& \second{0.448} & \second{0.394} & \second{0.399} & \second{0.403} 
& \second{0.318} & \second{0.388} & \second{0.329} & {0.398} 
& \second{0.264} & 0.307 & {0.203} & {0.292} & {0.610} & \second{0.376} \\

& w/o Node Group 
& 0.461 & 0.455 & 0.406 & 0.415 
& 0.329 & 0.394 & 0.344 & 0.402 
& 0.266 & 0.310 & 0.204 & 0.293 
& 0.616 & 0.377 \\

& w/o Filters 
& 0.465 & 0.456 & 0.408 & 0.416 
& 0.333 & 0.397 & 0.344 & 0.402 
& 0.272 & 0.313 & 0.209 & 0.298 
& 0.623 & 0.383 \\
& w/o $\mathcal{L}_\text{dyn}$ \& $\mathcal{L}_\text{imp}$ 
& 0.460 & 0.453 & 0.401 & 0.412 
& 0.325 & 0.390 & 0.338 & \second{0.396} 
& 0.264 & \second{0.302} & \second{0.201} & \second{0.291} 
& \second{0.609} & \second{0.376} \\
\midrule
\multirow{4}{*}{\rotatebox[origin=c]{90}{{\centering Replace}}} 
& Top-$K$ 
& 0.464 & 0.456 & 0.410 & 0.420 
& 0.333 & 0.396 & 0.343 & 0.420 
& 0.269 & 0.314 & 0.208 & 0.298 
& 0.621 & 0.380 \\
& Random-$K$ 
& 0.465 & 0.456 & 0.407 & 0.418 
& 0.332 & 0.395 & 0.343 & 0.419 
& 0.269 & 0.313 & 0.209 & 0.299 
& 0.620 & 0.380 \\
& RegionTop-$K$ 
& 0.459 & 0.453 & 0.405 & 0.415 
& 0.331 & 0.392 & 0.335 & {0.398} 
& 0.265 & 0.308 & 0.206 & 0.295 
& 0.617 & 0.378 \\
&TimeFilter 
&0.457 & 0.452 &0.406 &0.415 
&0.328 & 0.390 &0.336 &0.402 
&0.262 &0.305 &0.202 &\second{0.291} 
&0.615 &0.377 \\
\bottomrule
\end{tabular}
\end{table*}

\subsection{Ablation Study}\label{sec:abalation}

\vspace{-6pt}
We conduct ablation studies to evaluate each component's contribution in xCPD, as shown in Table~\ref{tab:ablation}. \textbf{Component Removal:} (1) w/o Shared Basis removes the shared Fourier basis, using separate bases per batch.
(2) {w/o Freq. Partition} removes learnable frequency boundaries, using fixed equal splits.
(3) {w/o Node Group} replaces energy-based node grouping with random assignment.
(4) {w/o Filter} omits the filtering step and uses full ego-graphs. 
(5) {w/o \begin{footnotesize} ${\mathcal{L}_\text{dyn} \& \mathcal{L}_\text{imp}}$ \end{footnotesize}} removes regularization on expert diversity and balance. 
\textbf{Strategy Replacement:} 
(6) {Top-K} selects top-$K$ edges by weight in each ego-graph. 
(7) {Random-K} randomly samples $K$ edges.
(8) {RegionTop-K} selects top-$K$ edges within each frequency band.
(9) {TimeFilter} applies a spatial-temporal filter~\citep{hu2025timefilter}.
{Please note that, to ensure a fair mechanism-level comparison within our ablation framework, we adopt only the core filtering module of TimeFilter and integrate it as a plugin into the DLinear backbone (denoted as DLinear+TimeFilter), rather than using the full TimeFilter architecture.}
All replacement strategies lead to performance drops, highlighting the effectiveness of xCPD's spectral filters. Appendix~\ref{appendix-ablation} provides a more detailed analysis.

\vspace{-4pt}
\subsection{Qualitative Study}\label{sec:visualize}
\vspace{-4pt}

\begin{wrapfigure}{R}{0.55\textwidth}\vspace{-12pt}
\centering
\hspace{-4pt} 
\includegraphics[scale=0.355]{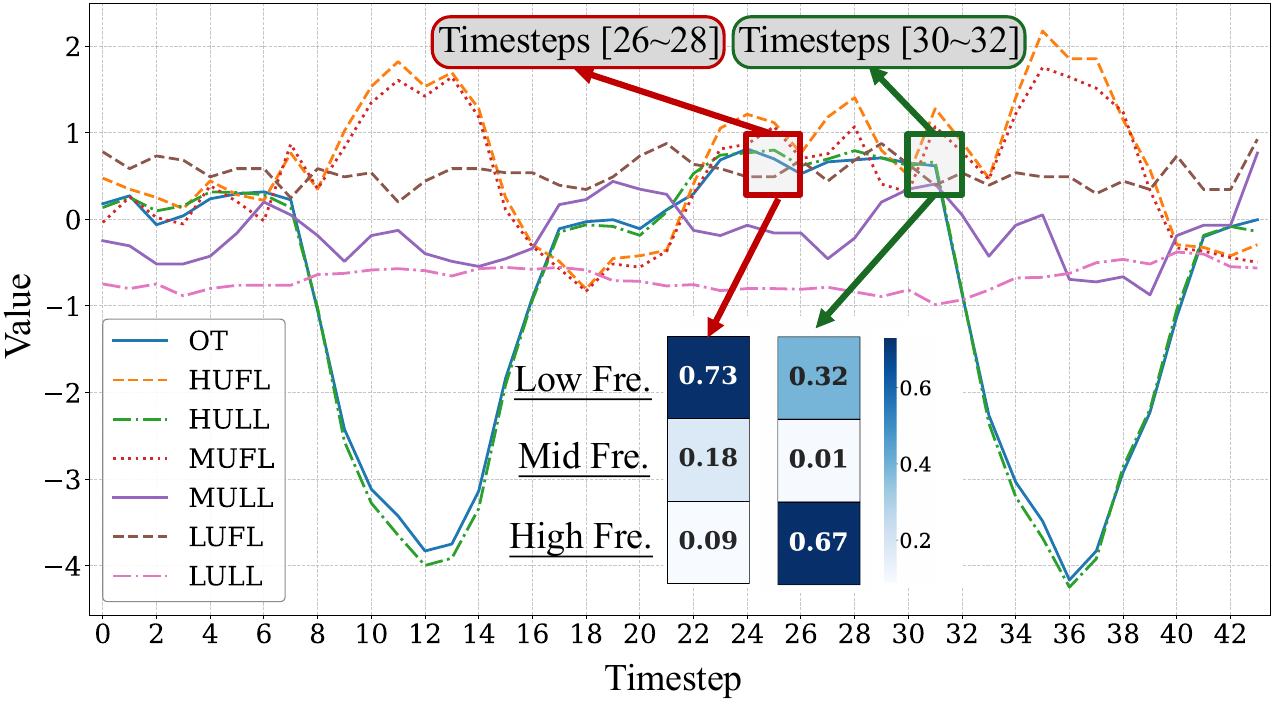}
\vspace{-0.7cm}
\caption{Filters visualization of HULL on ETTh1.}\vspace{-6pt}
\label{fig:visualization}
\end{wrapfigure}

Figure~\ref{fig:visualization} illustrates the correlation between spectral energy and temporal patterns in channel-patch nodes. The red node exhibits smooth temporal trends with gradual variations, concentrating energy in low-frequency bands and primarily aggregating dependencies from other low-frequency nodes. In contrast, the green node displays abrupt fluctuations and rapid state changes, resulting in dominant high-frequency spectral energy that drives its connections to other high-frequency nodes. 
{Importantly, these frequency assignments are not manually predetermined. They emerge naturally from the learned spectral energy response, which allows the model to discover frequency structure inherent in the data rather than relying on heuristic thresholds. This interpretability is difficult to obtain from temporal-domain attention mechanisms, where trend, fluctuation, and noise components remain mixed and cannot be disentangled. Theorem~\ref{theorem-4.2} further guarantees that spectral energy preserves all information in the original patch embedding. As a result, the alignment between temporal behaviors and spectral energy distributions observed in Figure~\ref{fig:visualization} represents a faithful and principled consequence of spectral decomposition, providing intuitive and robust insight into the learned frequency-based grouping.}

\section{Conclusion}
In this paper, we propose xCPD, a lightweight plugin module that enhances multivariate time series forecasting by modeling fine-grained, frequency-aware channel-patch dependencies. 
Extensive experiments demonstrate that xCPD consistently improves forecasting accuracy across diverse datasets and backbones while maintaining linear computational complexity. 
Its model-agnostic design enables seamless integration into existing forecasting models, establishing spectral decomposition and adaptive routing as powerful tools for capturing complex temporal dependencies in MTSF.

\section*{Ethics Statement}

This work presents a technical contribution to multivariate time series forecasting and raises no ethical concerns. Our research does not involve human subjects, uses only publicly available benchmark datasets with proper citations, and contains no personally identifiable information. The proposed method, xCPD, is a general-purpose forecasting enhancement technique with no inherent potential for harm or misuse beyond standard time series prediction applications. We have no conflicts of interest to declare. The computational requirements of xCPD are modest and environmentally responsible compared to existing models. We confirm that this research complies with the ICLR Code of Ethics.

\section*{Reproducibility Statement}

To ensure the reproducibility of our work, we provide comprehensive implementation details and experimental protocols throughout the paper and the supplementary materials. The complete architecture of the model and the algorithmic details are presented in Section~\ref{method}, with mathematical formulations for all key components. Section~\ref{exp_setup} describes the experimental setup, including the 9 benchmark datasets used, baseline methods, and implementation details such as optimizer settings, learning rates, and random seed protocols. Appendix~\ref{appd:dataset} provides detailed statistics and pre-processing steps of the dataset. Appendix~\ref{add-exp} contains extended experimental results and additional implementation specifics, including batch sizes, training epochs, and early stopping criteria. Appendix~\ref{appd:exp_details} lists all evaluation metrics and the complete hyperparameter search spaces with selected values for each dataset. All experiments were carried out using PyTorch on NVIDIA RTX A100 GPUs, with average results of 5 random seeds to ensure statistical reliability. The anonymous repository provided contains the full source code, data pre-processing scripts, and trained model checkpoints to facilitate the exact reproduction of all reported results.

\bibliography{iclr2026_conference}
\bibliographystyle{iclr2026_conference}

\appendix
\newpage
\section*{Appendix Contents}
\startcontents[appendices]
\printcontents[appendices]{}{1}

\newpage
\section{The Use of Large Language Models (LLMs)}

We acknowledge the use of large language models to assist with manuscript preparation. Specifically, LLMs were employed solely for language refinement tasks, including grammatical corrections, vocabulary enhancement, and sentence structure improvements. All scientific content, including research methodology, experimental design, data analysis, theoretical developments, and technical conclusions, represents the original intellectual contribution of the authors. The LLMs did not generate any substantive scientific content, interpretations, or novel insights presented in this work.

\section{Datasets}\label{appd:dataset}

\subsection{Long-term Time Series Forecasting Datasets}

In Table~\ref{append:dataset}, we summarize the detailed statistics of all datasets used in our experiments. We conduct extensive evaluations on 9 widely used multivariate time series datasets for long-term forecasting and two datasets for short-term forecasting. The ETT dataset~\citep{Informer} contains temperature and power load data collected at hourly (ETTh) and 15-minute (ETTm) intervals. 
Exchange~\citep{exchange} compiles daily exchange rate information from 1990 to 2016.
Solar-Energy~\citep{exchange} provides 10-minute solar power readings from 137 PV plants.  
The Weather~\citep{timesnet}, Electricity~\citep{timesnet}, and Traffic~\citep{timesnet} datasets include high-resolution environmental and usage signals, recorded respectively at 10-minute, hourly, and hourly intervals. 

\subsection{Short-term Time Series Forecasting Datasets}
For short-term univariate forecasting, we use the M4 dataset~\citep{CCM}, comprising 100,000 time series across multiple domains and six frequency-based subgroups. Additionally, we use a stock forecasting benchmark~\citep{CCM} based on 10 years of 5-minute price data from 1,390 stocks. The data span from November 25, 2013 to September 1, 2023, with missing timestamps interpolated to ensure sequence alignment. This stock dataset emphasizes short-term dynamics and market fluctuations, with forecasting horizons set to 7 and 24 time steps.

\begin{table}[h]
\centering
\caption{Statistics of all datasets used in our experiments.  ``Frequency'' denotes the sampling interval of time points.}
\label{append:dataset}
\resizebox{\textwidth}{!}{%
\begin{tabular}{lccccccc}
\toprule
Dataset & Type & Dim & Lookback Length & Prediction Length & Total Length & Frequency & Domain \\
\midrule
\multicolumn{8}{c}{\textit{Long-term Forecasting Datasets}} \\
\midrule
ETTh1 & Multivariate & 7 & 96 & \{96, 192, 336, 720\} & 17,420 & Hourly & Electricity \\
ETTh2 & Multivariate & 7 & 96 & \{96, 192, 336, 720\} & 17,420 & Hourly & Electricity \\
ETTm1 & Multivariate & 7 & 96 & \{96, 192, 336, 720\} & 69,680 & 15-min & Electricity \\
ETTm2 & Multivariate & 7 & 96 & \{96, 192, 336, 720\} & 69,680 & 15-min & Electricity \\
Weather & Multivariate & 21 & 96 & \{96, 192, 336, 720\} & 52,695 & 10-min & Meteorology \\
Electricity & Multivariate & 321 & 96 & \{96, 192, 336, 720\} & 26,304 & Hourly & Energy \\
Traffic & Multivariate & 862 & 96 & \{96, 192, 336, 720\} & 17,544 & Hourly & Transportation \\
Exchange & Multivariate & 8 & 96 & \{96, 192, 336, 720\} & 7,588 & Daily & Finance \\
Solar-Energy & Multivariate & 137 & 96 & \{96, 192, 336, 720\} & 52,560 & 10-min & Energy \\
\midrule
\multicolumn{8}{c}{\textit{Short-term Forecasting Datasets}} \\
\midrule
M4-Yearly & Univariate & 1 & \{12, 18, 24, 30, 36, 42\} & 6 & Various & Yearly & Mixed \\
M4-Quarterly & Univariate & 1 & \{16, 24, 32, 40, 48, 56\} & 8 & Various & Quarterly & Mixed \\
M4-Monthly & Univariate & 1 & \{36, 54, 72, 90, 108, 126\} & 18 & Various & Monthly & Mixed \\
M4-Weekly & Univariate & 1 & \{26, 39, 52, 65, 78, 91\} & 13 & Various & Weekly & Mixed \\
M4-Daily & Univariate & 1 & \{28, 42, 56, 70, 84, 98\} & 14 & Various & Daily & Mixed \\
M4-Hourly & Univariate & 1 & \{96, 144, 192, 240, 288, 336\} & 48 & Various & Hourly & Mixed \\
Stock & Univariate & 1 & \{24,48\} & \{7, 24\} & 2,026,990 & 5-min & Finance \\
\bottomrule
\end{tabular}%
}
\end{table}

\section{Additional Experiments}\label{add-exp}

\subsection{Additional Details of Adjacency Matrix Construction}
\label{app-adjacency}

{This subsection provides additional details on the construction of the adjacency matrix $A^{t}$ mentioned in Section~\ref{Embedding}, focusing on three key aspects: (i) non-triviality of dense similarity graphs, (ii) numerical stability through Laplacian normalization, and (iii) the operational sparsity enforced during routing. 
\begin{itemize}
\item The adjacency matrix is constructed using cosine similarity$A^t_{ij}=\cos\,\bigl(X^{\mathrm{emb},t}_i,\, X^{\mathrm{emb},t}_j \bigr),$ which guarantees that all affinity values lie within the bounded interval $[-1,1]$. This boundedness ensures numerical stability and prevents degenerate scaling of affinities. Moreover, cosine similarity preserves angular variation between embeddings, yielding a highly non-uniform structure even when the matrix is dense. The diagonal elements naturally satisfy $A^{t}_{ii} = 1$, eliminating the need for explicit self-loop regularization. Dense cosine-similarity graphs of this form are standard in spectral graph methods, where global affinity patterns are crucial for identifying meaningful frequency components.
\item The shared Fourier basis is derived from the normalized Laplacian
$L^{t} = I - D^{-1/2} A^{t} D^{-1/2},$ whose eigenvalues are guaranteed to lie within $[0,2]$. This normalization controls the operator scale, stabilizes the eigendecomposition, and promotes clear separation between low- and high-frequency components regardless of the density of $A^{t}$. These properties ensure that spectral representations are well-conditioned across timesteps, which is essential for learning the shared Fourier basis in Theorem~4.1.
\item Although $A^{t}$ is dense, downstream routing does not operate on the full dense graph. Instead, xCPD applies a k-NN sparsification procedure to each node, retaining only the most relevant neighbors based on similarity. Furthermore, the frequency-aware MoE router activates only a subset of spectral components for each sample, resulting in selective and structured dependency pathways. Consequently, the effective computation graph is sparse and informative, avoiding the trivial connectivity patterns typically associated with dense graphs.
\end{itemize}}

\subsection{Comparison with PCD and PRReg}
\label{PCD-PRReg}

PCD~\citep{lee2025partial} is not included in our main experimental tables due to its architectural constraints. As detailed in their original work, PCD's core mechanisms require self-attention operations, limiting its applicability to Transformer-based models. This restriction prevents fair evaluation across the diverse model families considered in our study, including linear models (DLinear) and CNN-based architectures (TimesNet). To provide comprehensive analysis, we conduct additional experiments comparing xCPD and PCD on PatchTST, a Transformer backbone where both plugins are applicable. As shown in Table~\ref{tab:pcd_comparison}, xCPD consistently outperforms PCD across all eight benchmark datasets, demonstrating superior performance even within PCD's intended architectural domain.

\begin{table}[h]
\centering
\scriptsize
\setlength{\tabcolsep}{4pt}
\caption{Long-term forecasting results with lookback window 96 and prediction horizons \{96, 192, 336, 720\}. Results show MSE/MAE averaged over three runs with best in \textbf{bold}.}
\label{tab:pcd_comparison}
\begin{tabular}{lcccccccc}
\toprule
Method & ETTh1 & ETTh2 & ETTm1 & ETTm2 & Exchange & Weather & Electricity & Traffic \\ 
\midrule
PatchTST+PCD  & 0.468/0.456 & 0.379/0.399 & 0.382/0.393 & 0.272/0.325 & 0.366/0.403 & 0.255/0.281 & 0.198/0.288 & 0.482/0.300 \\
PatchTST+xCPD & \textbf{0.455/0.443} & \textbf{0.367/0.393} & \textbf{0.371/0.388} & \textbf{0.265/0.317} & \textbf{0.358/0.397} & \textbf{0.248/0.272} & \textbf{0.191/0.280} & \textbf{0.466/0.292} \\
\bottomrule
\end{tabular}
\end{table}

PRReg~\citep{han2024capacity} differs fundamentally from plugin modules in its design philosophy. While we compare against PRReg in Table~\ref{tab:prreg_comparison} using their proposed regularization framework, PRReg operates through loss regularization terms requiring joint optimization and careful hyperparameter tuning for each backbone-dataset combination. This contrasts with xCPD's plug-and-play design, which can be applied post-hoc without modifying the training process. Furthermore, PRReg's performance varies significantly across architectures due to its dependence on model-specific gradient dynamics, whereas xCPD maintains consistent improvements through its architecture-agnostic spectral operations. Despite these differences, xCPD outperforms PRReg even when evaluated within PRReg's own optimization framework (Table~\ref{tab:prreg_comparison}), further validating our approach.

\subsection{Performance on Additional Backbones}
\label{Additional-backbones}

To evaluate xCPD's generalizability beyond our primary baselines, we conduct experiments on two recent forecasting architectures: iTransformer~\citep{itransformer} and TimeMixer~\citep{timemixer}. These models represent state-of-the-art advances in attention-based and MLP-mixing architectures, respectively. Table~\ref{tab:recent_models} presents results across six benchmark datasets. xCPD achieves consistent improvements, with average MSE reductions of 3.7\% for iTransformer and 2.8\% for TimeMixer. The improvements are particularly pronounced on Solar-Energy (3.0\% and 2.8\% respectively), where complex multi-scale temporal patterns benefit from our frequency-aware modeling. These results confirm xCPD's compatibility with diverse architectural paradigms beyond the linear, transformer, and CNN-based models evaluated in our main experiments.

\begin{table}[h]
\centering
\caption{Performance on iTransformer and TimeMixer. Results averaged over forecasting horizons \{96, 192, 336, 720\} with lookback window 96. Values show MSE/MAE with best results in \textbf{bold}.}
\label{tab:recent_models}
\small
\setlength{\tabcolsep}{3.5pt}
\begin{tabular}{lcccccc}
\toprule
Model & ETTh1 & ETTh2 & ETTm1 & ETTm2 & Solar-Energy & Weather \\
\midrule
iTransformer & 0.455/0.447 & 0.383/0.407 & 0.407/0.411 & 0.288/0.332 & 0.233/0.262 & 0.258/0.278 \\
iTransformer+xCPD & \textbf{0.438/0.440} & \textbf{0.375/0.400} & \textbf{0.394/0.401} & \textbf{0.277/0.325} & \textbf{0.226/0.256} & \textbf{0.250/0.269} \\
\midrule
TimeMixer & 0.447/0.440 & 0.365/0.396 & 0.381/0.395 & 0.275/0.323 & 0.216/0.280 & 0.240/0.271 \\
TimeMixer+xCPD & \textbf{0.435/0.432} & \textbf{0.358/0.389} & \textbf{0.372/0.388} & \textbf{0.268/0.317} & \textbf{0.210/0.275} & \textbf{0.233/0.266} \\
\midrule
{DUET} &{0.443/0.437}  &{0.373/0.398}  &{0.390/0.394}  &{0.280/0.325}  &{0.238/0.235}    &{0.252/0.273}\\
{\textbf{DUET +xCPD}} &{\textbf{0.436/0.433}}  &{\textbf{0.364/0.392}}  &{\textbf{0.381/0.390}}  &{\textbf{0.275/0.318}}  &{\textbf{0.231/0.234}}    &{\textbf{0.246/0.270}} \\
\bottomrule
\end{tabular}
\end{table}

{Beyond the CI/CD backbones evaluated in this paper, we were also interested in examining whether xCPD remains effective when paired with a CP-based backbone. To this end, we additionally include DUET~\cite{duet}, which performs channel-level dependency modeling through global channel clustering. Following the same experimental setup as in our main experiments, we freeze all backbone parameters and fine-tune only the parameters introduced by our xCPD plugin across six datasets. As shown in Table~\ref{tab:recent_models}, we observe two interesting findings: (1) xCPD continues to yield performance improvements even on a CP backbone. This is because DUET models channel level dependencies, whereas xCPD captures channel–patch level dependencies that provide finer-grained relational information;
(2) The improvement achieved by DUET is less pronounced than that of CI/CD backbones such as TSMixer and iTransformer. We hypothesize that DUET’s hard channel clustering partially disrupts the original dependency structure in the raw signal, making subsequent patch-level CPD more difficult to learn.
}

\subsection{Impact of Input Length on Performance}
\label{app:extended}

The performance of DLinear~\citep{dlinear} on the Traffic dataset differs between our Table~\ref{long_forcast} and the original paper. This discrepancy arises from different lookback window configurations: DLinear's original experiments use a lookback window of 336, while we adopt the standardized setting of 96 following recent work~\citep{itransformer} to ensure fair comparison across all baselines. To investigate the impact of input length and validate xCPD's robustness, we conduct experiments with both configurations. Table~\ref{tab:traffic_96} and Table~\ref{tab:traffic_336} present results under lookback horizons of 96 and 336, respectively.

\begin{table}[h]
\centering
\caption{Performance on Traffic dataset with look-back horizon 96.}
\label{tab:traffic_96}
\small
\setlength{\tabcolsep}{16pt} 
\begin{tabular}{lcccc}
\toprule
\multirow{2}{*}{Method} & \multicolumn{4}{c}{Forecasting Horizon } \\
\cmidrule{2-5}
& 96 & 192 & 336 & 720 \\
 & {MSE/MAE} & {MSE/MAE} & {MSE/MAE} & {MSE/MAE}     \\ \midrule
DLinear & 0.650/0.396 & 0.598/0.370 & 0.605/0.373 & 0.645/0.394 \\
DLinear+xCPD & \textbf{0.636}/\textbf{0.386} & \textbf{0.574}/\textbf{0.356} & \textbf{0.586}/\textbf{0.362} & \textbf{0.630}/\textbf{0.387} \\
\bottomrule
\end{tabular}
\end{table}

\begin{table}[h]
\centering
\caption{Performance on Traffic dataset with look-back horizon 336.}
\label{tab:traffic_336}
\small
\setlength{\tabcolsep}{16pt} 
\begin{tabular}{lcccc}
\toprule
\multirow{2}{*}{Method} & \multicolumn{4}{c}{Prediction Horizon } \\
\cmidrule{2-5}
& 96 & 192 & 336 & 720 \\
 & {MSE/MAE} & {MSE/MAE} & {MSE/MAE} & {MSE/MAE}     \\ \midrule
DLinear & 0.412/0.282 & 0.423/0.287 & 0.437/0.296 & 0.468/0.316 \\
DLinear+xCPD & \textbf{0.406}/\textbf{0.280} & \textbf{0.411}/\textbf{0.283} & \textbf{0.428}/\textbf{0.295} & \textbf{0.457}/\textbf{0.311} \\
\bottomrule
\end{tabular}
\end{table}

The results demonstrate two key findings: (1) DLinear's performance with lookback window 336 aligns with the original paper, confirming our implementation correctness; (2) xCPD consistently improves performance across both configurations, achieving average MSE reductions of 3.2\% (lookback=96) and 2.3\% (lookback=336). Notably, the longer input window provides better baseline performance (0.435 vs 0.625 average MSE), yet xCPD maintains its effectiveness regardless of input length. This confirms that performance differences stem from experimental settings rather than implementation issues, while validating xCPD's robustness across varying input contexts.

\subsection{Dataset-dependent Performance Improvements}
\label{app:analysis}

{We observe that the performance gains brought by xCPD are not uniform across datasets, and these differences are closely tied to the spectral and multivariate characteristics of the underlying signals. Since xCPD explicitly models fine-grained, frequency-aware channel--patch dependencies, datasets with richer spectral variability naturally benefit more from our plugin. We summarize the dataset-dependent behavior as follows.}
\begin{itemize}
\item {\textbf{Small improvements (1\%--3\%): ETT and SolarEnergy.}
They exhibit highly regular, low-frequency periodicity dominated by daily or hourly seasonal cycles. Their cross-channel relationships are relatively stable and already well captured by CI/CD backbones such as TSMixer and iTransformer. As a result, xCPD provides moderate but consistent improvements by modeling subtle patch-level variations that are not fully exploited by these backbones.}
\item {\textbf{Medium improvements (3\%--4\%): Exchange and Weather.}
They contain mixed spectral structures, combining smooth seasonal trends with occasional short-term fluctuations. These datasets feature moderate multivariate interactions that are partially captured by the backbone but still benefit from xCPD’s ability to extract localized, frequency-aware channel--patch dependencies. Consequently, xCPD yields steady improvements in this mid-range category.}
\item {\textbf{Large improvements (4\%--7\%): Electricity and Traffic.} They display the richest multi-scale frequency patterns among all datasets, including local bursts, irregular short-term spikes, asynchronous channel shifts, and strong high-frequency components. Such volatile and rapidly changing cross-channel relationships challenge backbones that rely on coarse channel mixing. In these settings, xCPD's fine-grained frequency-aware modeling becomes particularly effective, consistently producing the largest performance gains across all benchmarks.}
\end{itemize}

{In summary, the improvement pattern across datasets aligns strongly with their spectral complexity: the richer and more volatile the frequency structure, the greater the benefit from explicit frequency-aware channel--patch dependency modeling. This analysis provides practical guidance on when xCPD yields the most substantial improvements and highlights the importance of spectral diversity in multivariate time-series forecasting.}

\subsection{Robustness to Non-stationary Dynamics}
\label{robustness-dynamics}

Real-world multivariate time series often exhibit non-stationary behavior with evolving inter-channel dependencies. To evaluate xCPD's robustness under such conditions, we conduct controlled experiments with synthetically induced non-stationarity.

\textbf{Experimental Setup.} 
We perturb the ETTm1 dataset to simulate temporal misalignment and amplitude variations commonly observed in real-world scenarios. Specifically, we apply two types of perturbations to each channel: 
(i) random temporal shifts within $\pm 12$ time steps to disrupt cross-channel synchronization, and 
(ii) random amplitude scaling factors sampled from $\mathcal{U}[0.8, 1.2]$ to introduce magnitude variations. 
These perturbations directly challenge the stationarity assumption underlying our shared Fourier basis (Theorem~\ref{theorem1}), providing a rigorous test of xCPD's robustness.

\begin{table}[h]
\centering
\caption{Ablation study under synthetic non-stationarity on ETTm1 (lookback window: 96, forecasting horizon: 96). The shared spectral basis maintains performance despite 
temporal perturbations.}
\label{tab:nonstationarity}
\small
\setlength{\tabcolsep}{24pt} 
\begin{tabular}{llcc}
\toprule
Model & Configuration & MSE $\downarrow$ & MAE $\downarrow$ \\
\midrule
\multirow{2}{*}{DLinear+xCPD} & w/ Shared Basis & \textbf{0.389} & \textbf{0.395} \\
 & w/o Shared Basis & 0.413 & 0.425 \\
\midrule
\multirow{2}{*}{TimesNet+xCPD} & w/ Shared Basis & \textbf{0.373} & \textbf{0.384} \\
 & w/o Shared Basis & 0.395 & 0.403 \\
\bottomrule
\end{tabular}
\end{table}

\textbf{Results and Analysis.} Table~\ref{tab:nonstationarity} presents the performance comparison with and without the shared spectral basis under synthetic non-stationarity. Despite induced perturbations, xCPD maintains strong performance with only marginal degradation compared to the stationary setting (Table~\ref{long_forcast}). 
Notably, removing the shared basis results in significant performance deterioration (8.3\% MSE increase), confirming its critical role in maintaining spectral alignment under non-stationary conditions.

This robustness stems from two complementary mechanisms. First, as formalized in Theorem~\ref{theorem1}, the shared basis $\mU$ approximates time-varying bases $\mU^{(t)}$ up to orthogonal rotation, preserving spectral relationships despite temporal variations. The bound $\|\mU^{(t)} - \mU\mathbf{R}^{(t)}\|_F \leq \mathbf{C}\|\mL^{(t)} - \mL_\text{avg}\|_F$ ensures graceful degradation rather than catastrophic failure when stationarity is violated. Second, the dynamic MoE routing provides local adaptability by selecting frequency-specific filters based on instantaneous node characteristics, effectively compensating for global approximation errors introduced by the shared basis. These results, combined with strong performance on inherently non-stationary datasets, validate that xCPD effectively handles time-varying dependencies through the synergy of global spectral alignment and local adaptive routing. This dual mechanism enables robust performance across both controlled perturbations and real-world non-stationary scenarios.

\subsection{Analysis of Ablation Study}
\label{appendix-ablation}

The ablation \textit{w/o Freq. Partition} replaces learnable boundaries with fixed frequency band allocation (33\%/34\%/33\%). Despite limiting adaptation 
to dataset-specific spectral patterns, this variant shows minimal performance degradation compared to other ablations. 
This robustness comes from three complementary factors:
\begin{itemize}[
topsep=0pt,        
partopsep=0pt,     
itemsep=2pt,      
parsep=0pt,        
leftmargin=1em     
]
\item First, the spectral grouping mechanism remains functional. Despite fixed boundaries, nodes compute spectral energy responses across three bands, enabling frequency-aware graph construction that captures coarse but discriminative patterns (e.g., low-frequency dominance for trends).

\item Second, the shared Fourier basis $\mathbf{U}$ maintains consistent spectral representation across batches, thereby ensuring stable energy computation that remains independent of boundary adaptation.

\item Third, the MoE routing module retains full adaptability by dynamically adjusting filter selection based on node-specific energy profiles, which partially compensates for the limitations imposed by fixed partitioning.

\end{itemize}

These results suggest that while learnable partitioning enhances fine-grained spectral alignment, the three-band decomposition itself provides a strong inductive bias for frequency-aware modeling. In contrast, ablations that disrupt core mechanisms, such as spectral grouping (\textit{w/o Node Group}) or adaptive filtering (\textit{w/o Filters}), yield larger performance drops, confirming their critical roles.

{Table~\ref{tab:ablation} compares the proposed xCPD with several alternative dependency modeling mechanisms using a unified backbone (DLinear). We note that the MSE/MAE values of TimeFilter in Table~\ref{tab:ablation} differ from those reported in the original TimeFilter paper~\citep{hu2025timefilter}. We provide below an explanation to clarify this discrepancy and avoid potential misunderstanding. The original TimeFilter is introduced as a complete forecasting architecture built upon temporal graph neural networks, equipped with its own encoder, normalization layers, and additional architectural components. These design choices contribute substantially to its reported performance. 
In contrast, the purpose of Table~\ref{tab:ablation} is to evaluate dependency mechanisms {in a controlled and unified setting}. Therefore, we re-implemented only the core spatio-temporal filtering module of TimeFilter and integrated it as a lightweight plugin into the DLinear backbone. This variant is denoted as \textbf{DLinear+TimeFilter}. The plugin implementation isolates solely the dependency-filtering logic, without incorporating the architectural advantages of the full TimeFilter model.}

{Furthermore, Table~\ref{tab:ablation} is designed to answer the following question: \textit{Which dependency mechanism is more effective when integrated into a generic backbone?} Under this controlled setup, \textbf{DLinear+xCPD consistently outperforms DLinear+TimeFilter} across all datasets. This demonstrates that the proposed spectral DyMoE mechanism provides a more effective, generalizable, and backbone-agnostic dependency modeling capability than the temporal-domain filtering strategy used in TimeFilter.}

\subsection{Zero-shot Forecasting Performance}

Our zero-shot evaluation assesses the transferability of plugin modules across domains without retraining. This setting requires plugins to be architecture-agnostic and training-decoupled. Under these criteria, we compare xCPD with CCM, 
the only existing baseline that satisfies these requirements. LIFT~\citep{LIFT} is excluded from this evaluation due to fundamental incompatibilities: (i) it modifies backbone architectures through additional layers and loss terms, requiring joint optimization rather than modular integration;
(ii) it learns dataset-specific transformations without storing transferable patterns or prototypes;
(iii) it lacks decoupled components necessary for cross-domain deployment.
In contrast, both CCM and xCPD maintain explicit, reusable representations, i.e., CCM through frequency prototypes and xCPD through spectral bases, enabling zero-shot 
transfer.

Table~\ref{tab-append:full-zero-shot} presents the zero-shot forecasting results, where the plugins trained on the source datasets are directly applied to the target datasets without adaptation. xCPD demonstrates superior transferability, achieving consistent improvements across diverse domain pairs. This validates our design principle of learning universal spectral representations rather than dataset-specific transformations.

\begin{table}[h]
\centering
\scriptsize
\caption{Zero-shot forecasting results on ETT datasets. Look-back horizon is 96 and the forecasting horizon is $\{96, 720\}$. Best performance on each setting is in \textbf{bold}.}\label{tab-append:full-zero-shot}
\resizebox{1\linewidth}{!}{
\begin{tabular}{c|ccc|ccc|ccc|ccc}\toprule
\multirow{2}{*}{Model} & \multicolumn{1}{c}{TSMixer} & \multicolumn{1}{c}{+ CCM} & \multicolumn{1}{c|}{+ xCPD}  & \multicolumn{1}{c}{DLinear} & \multicolumn{1}{c}{+ CCM} & \multicolumn{1}{c|}{+ xCPD} & \multicolumn{1}{c}{PatchTST} & \multicolumn{1}{c}{+ CCM} & \multicolumn{1}{c|}{+ xCPD}  & \multicolumn{1}{c}{TimesNet}  & \multicolumn{1}{c}{+ CCM} & \multicolumn{1}{c}{+ xCPD} \\ 
 & {MSE / MAE} & {MSE / MAE} & {MSE / MAE} & {MSE / MAE} & {MSE / MAE} & {MSE / MAE} & {MSE / MAE} & {MSE / MAE} & {MSE / MAE} & {MSE / MAE} & {MSE / MAE} & {MSE / MAE}   \\ \midrule
\multirow{2}{*}{ h1$\rightarrow$h2} 
 & 0.288 / 0.357 &{{0.283}} / {0.353} &\textbf{0.280} / \textbf{0.348}        &0.358 / 0.405 &0.344 / 0.392 &\textbf{0.335} /\textbf{0.382}    &0.442 / 0.458 &0.438 / 0.449 &\textbf{0.418} / \textbf{0.426}      & 0.391 / 0.412 &{0.388} / {0.410} &\textbf{0.384} / \textbf{0.404} \\
& 0.374 / 0.414 &{{0.370}} / {{0.413}} &\textbf{0.366} / \textbf{0.406}    &0.855 / 0.682 &0.843 / 0.677 &\textbf{0.822} / \textbf{0.654}    &0.582 / 0.565 &0.571 / 0.562 &\textbf{0.544} / \textbf{0.551}      & 0.540 / 0.508 & {0.516} / {0.491} & \textbf{0.510} / \textbf{0.488} 
\\ \midrule
\multirow{2}{*}{ h1$\rightarrow$m1} 
& 0.763 / 0.677 &  {0.710} / {0.652} & \textbf{0.702} / \textbf{0.645}         &0.782 / 0.679 &0.723 / 0.658 &\textbf{0.702 / 0.632}    &0.754 / 0.688 &0.692 / 0.645 &\textbf{0.678 / 0.630}     & 0.887 / 0.718 & {0.827} / {0.700} & \textbf{0.801 / 0.682}  \\
& 1.252 / 0.815 &  {1.215} / {{0.803}} & \textbf{1.198 / 0.792}    &1.243 / 0.835 &1.224 / 0.812 &\textbf{1.212 / 0.798}    &1.892 / 0.954 &1.135 / 0.832 &\textbf{1.128 / 0.821}     & 1.623 / 0.981 &  {1.601} / {0.964} & \textbf{1.589 / 0.952} 
\\ \midrule
\multirow{2}{*}{ h1$\rightarrow$m2} 
& 0.959 / 0.694 &  {0.937} / {0.689} &\textbf{0.928 / 0.680}      &1.213 / 0.808 &0.912 / 0.682 &\textbf{0.890 / 0.673}     &1.132 / 0.732 &0.903 / 0.682 &\textbf{0.892 / 0.671}    &1.199 / 0.794 &  {1.122} / {0.731} &\textbf{1.086 / 0.719} \\
& 1.765 / 0.982 &  {1.758} / {0.980} & \textbf{1.712 / 0.974}    &2.132 / 1.216 &1.732 / 0.966 &\textbf{1.680 / 0.955}     &2.015 / 1.132 &1.732 / 0.998 &\textbf{1.712 / 0.975}    & 2.204 / 1.031 &  {1.874} / {1.012} &\textbf{1.835 / 1.003} 
\\ \midrule
\multirow{2}{*}{ h2$\rightarrow$h1} 
& 0.466 / 0.462 &  {0.455} / {0.456}  &\textbf{0.443 / 0.451}     &0.489 / 0.480 &0.445 / 0.462 &\textbf{0.423 / 0.433}     &0.632 / 0.577 &0.532 / 0.523 &\textbf{0.512 / 0.508}     & 0.869 / 0.624  & {0.752} / {0.590} &\textbf{0.731 / 0.586} \\
& 0.695 / 0.584 &  {0.540} / {0.519} & \textbf{0.528 / 0.513}
&0.533 / 0.542 &0.495 / 0.528 &\textbf{0.492 / 0.512}     &1.032 / 0.993 &0.986 / 0.766 &\textbf{0.944 / 0.689}  & 1.274 / 0.783 &  {0.845} / {0.642}  &\textbf{0.834 / 0.638} 
\\ \midrule
\multirow{2}{*}{ h2$\rightarrow$m2} 
& 0.943 / 0.726 &  {0.876} / {0.697}  &\textbf{0.868 / 0.688}     &0.758 / 0.679 &0.732 / 0.658 &\textbf{0.708 / 0.643}     &0.855 / 0.723 &0.781 / 0.697 &\textbf{0.772 / 0.692}     & 1.250 / 0.850 &   {1.064} / {0.793} &\textbf{1.033 / 0.785} \\
& 1.472 / 0.872 &  {1.464} / {0.866} &\textbf{1.422 / 0.854}     &1.892 / 0.953 &1.422 / 0.875 &\textbf{1.274 / 0.848}     &1.899 / 1.132 &1.644 / 0.894 &\textbf{1.542 / 0.873}     &1.861 / 1.016 &  {1.671} / {0.967} &\textbf{1.633 / 0.932} \\
\midrule
\multirow{2}{*}{ h2$\rightarrow$m1} 
& 1.254 / 0.771 &  {1.073} / {0.714} & \textbf{1.022 / 0.702}    &1.243 / 0.789 &0.939 / 0.692 &\textbf{0.898 / 0.671}     &1.012 / 0.732 &0.809 / 0.645 &\textbf{0.794 / 0.640}     & 1.049 / 0.791 &  {0.804} / {0.657} & \textbf{0.794 / 0.648} \\
& 2.275 / 1.137 &  {1.754} / {1.065} & \textbf{1.721 / 1.042}    &2.012 / 1.123 &1.850 / 1.023 &\textbf{1.792 / 0.982}     &2.732 / 1.158 &1.832 / 0.998 &\textbf{1.732 / 0.975}     &2.183 / 1.103 &  {1.742} / {0.983} & \textbf{1.712 / 0.977} \\
    \bottomrule
\end{tabular}}
\end{table}

\subsection{Computational Efficiency Analysis}
\label{Computation-Comparison}

We evaluate the computational overhead of different plugin methods on the Traffic dataset, measuring both memory consumption and training time per iteration. Table~\ref{tab:efficiency} presents a comparative analysis across three backbone models with and without plugins.

\begin{table}[h]
\centering
\caption{Computational efficiency comparison on Traffic dataset. Memory usage and training time measured on single NVIDIA A100 GPU.}
\label{tab:efficiency}
\small
\setlength{\tabcolsep}{4pt}
\begin{tabular}{lcc}
\toprule
Method & Memory (MB) & Time (ms/iter) \\
\midrule
\multicolumn{3}{c}{\textit{\textcolor{violet}{Baseline Models}}} \\
\midrule
PatchTST & 12,458 & 29 \\
TimesNet & 13,005 & 70 \\
TSMixer & 13,088 & 89 \\
\midrule
\multicolumn{3}{c}{\textit{\textcolor{violet}{With CCM Plugin}}} \\
\midrule
PatchTST + CCM & 25,034 & 161 \\
TimesNet + CCM & 20,614 & 172 \\
TSMixer + CCM & 22,698 & 183 \\
\midrule
\multicolumn{3}{c}{\textit{\textcolor{violet}{With xCPD Plugin}}} \\
\midrule
PatchTST + xCPD & \textbf{7,018} & \textbf{9} \\
TimesNet + xCPD & \textbf{7,018} & \textbf{8} \\
TSMixer + xCPD & \textbf{7,018} & \textbf{8} \\
\bottomrule
\end{tabular}
\end{table}

The results reveal substantial differences in computational overhead. CCM incurs significant costs, increasing memory usage by 60-100\% and training time by 4-5$\times$ compared to baseline models. This overhead originates from CCM's iterative clustering mechanism, whose complexity scales linearly with both cluster count and channel dimensionality. In contrast, xCPD achieves remarkable efficiency, maintaining constant memory footprint ($\approx$ 7GB) and minimal iteration time ($<$10ms) regardless of the backbone architecture. 
Importantly, the memory reported here corresponds to the peak GPU memory required to train the plugin module, not the full backbone model. Since xCPD trains only its own lightweight parameters while keeping the backbone frozen, the backbone does not allocate gradient states, optimizer buffers, or backward activations. Consequently, the memory consumption during xCPD training is dominated by the plugin itself, leading to similar values across different backbones.
This efficiency stems from three design principles: (i) spectral decomposition operates on pre-computed Fourier bases, eliminating iterative optimization; (ii) the shared spectral basis amortizes computational costs across batches; and (iii) frequency-aware 
routing employs only sparse matrix multiplications. These properties make xCPD particularly suitable for large-scale deployment and latency-sensitive applications.

\renewcommand{\arraystretch}{1.0}
\begin{table}[h!]
\footnotesize
    \centering
    \caption{Hyperparameter settings for different datasets, where $T'$ is the forecasting horizon, e\_layers is the number of graph blocks, lr is the learning rate, and d\_{ff} is the dimension of feed-forward layers.}
    \label{tab:hyperparams}
    \resizebox{0.9\textwidth}{!}{
    \begin{tabular}{c|l|c|c|c|c|c}
        \toprule
        Tasks & Dataset & Length of Patch & e\_layers & lr & d\_ff & Num. of Epochs\\
        \midrule
               & ETTh1 & $T'/16$ & $1$ & $1e$-$4$ & $256$ & $10$ \\
               & ETTh2 & $T'/16$ & $1$ & $1e$-$4$ & $256$ & $10$ \\
        & ETTm1 & $T'/12$ & $2$ & $1e$-$4$ & $256$ & $10$ \\
         Long-term  & ETTm2 & $T'/6$ & $2$ & $1e$-$4$ & $128$ & $10$ \\
        Forecasting & Exchange & $T'/24$ & $2$ & $1e$-$4$ & $256$ & $10$ \\
        & Weather & $T'/2$ & $2$ & $1e$-$4$ & $256$ & $10$ \\
        & Electricity & $T'/3$ & $2$ & $1e$-$4$ & $512$ & $15$ \\
        & Traffic & $T'$ & $2$ & $1e$-$4$ & $2048$ & $30$ \\
        & Solar-Energy & $T'/2$ & $2$ & $1e$-$4$ & $512$ & $10$ \\
        \midrule
        & M4 Yearly & $1$ & $1$ & $1e$-$3$  & $512$ & $20$ \\
         & M4 Quarterly & $1$ & $1$ & $1e$-$3$ & $512$ & $20$ \\
         Short-term  & M4 Monthly & $1$ & $1$ & $1e$-$3$ & $512$ & $20$ \\
        Forecasting& M4 Weekly & $1$ & $1$ & $1e$-$3$ & $512$ & $20$ \\
        & M4 Daily & $1$ & $1$ & $1e$-$3$ & $512$ & $20$ \\
        & M4 Hourly & $2$ & $1$ & $1e$-$3$ & $512$ & $20$ \\
         & Stock & $1$ & $1$ & $1e$-$3$ & $512$ & $20$ \\
        \bottomrule
    \end{tabular}
    }
\end{table}

\section{Experiment Details}\label{appd:exp_details}

\subsection{Metrics}\label{appd:metric}

We adopt standard evaluation metrics following prior work~\citep{timesnet,CCM}. Specifically, for long-term and stock forecasting, we report \textbf{Mean Absolute Error (MAE)} and \textbf{Mean Squared Error (MSE)}. For the M4 dataset~\citep{makridakis2018m4}, we follow its official protocol, using three metrics: \textbf{Symmetric Mean Absolute Percentage Error (SMAPE)}, \textbf{Mean Absolute Scaled Error (MASE)}, and \textbf{Overall Weighted Average (OWA)}. 
MASE is a scale-independent metric that compares predictions against a naïve seasonal baseline with periodicity $s$. SMAPE scales errors by the average of predictions and targets, while OWA combines SMAPE and MASE, normalized by the performance of the Naïve2 baseline~\citep{m42018m4}. 

To formulate these metrics, let $\mX_t$ and $\widehat{\mX}_t$ denote the ground truth and prediction at timestep $t$, respectively, and let $T'$ be the forecasting horizon.
They are formulated as follows:
\begin{footnotesize}
\begin{equation}\label{eq:metrics1}
\text{MAE} = \frac{1}{T'} \sum_{t=1}^{T'} |\mX_t - \widehat{\mX}_t|, \qquad 
\text{MSE} = \frac{1}{T'} \sum_{t=1}^{T'} (\mX_t - \widehat{\mX}_t)^2, \quad
\text{SMAPE} = \frac{200}{T'} \sum_{t=1}^{T'} \frac{|\mX_t - \widehat{\mX}_t|}{|\mX_t| + |\widehat{\mX}_t|}.
\end{equation}   
\end{footnotesize}

\begin{footnotesize}
\begin{equation}\label{eq:metrics2}
\text{MASE} = \frac{1}{T'} \sum_{t=1}^{T'} \frac{|\mX_t - \widehat{\mX}_t|}{\frac{1}{T'-s} \sum_{j=s+1}^{T'} |\mX_j - \mX_{j-s}|}, \quad
\text{OWA} = \frac{1}{2} \left[ \frac{\text{SMAPE}}{\text{SMAPE}_{\text{Naïve2}}} + \frac{\text{MASE}}{\text{MASE}_{\text{Naïve2}}} \right].
\end{equation}    
\end{footnotesize}

\subsection{Hyperparameters}\label{appd:hyperparameters}

\subsubsection{Sensitivity to k-NN Sparsification Ratio.}
\label{app-k-NN}

{We analyze the sensitivity of the sparsification ratio $\alpha$ used in the ego-graph construction. In our implementation, we adopt a constant sparsification ratio by retaining the top-$\alpha$ proportion of edges in each node's adjacency row. To evaluate whether adaptive $k$ is necessary, we vary $\alpha$ from 0.3 to 0.7 and report results across six datasets.}

\begin{table}[h!]
\centering
\caption{{Sensitivity to sparsification ratio $\alpha$ (top-$\alpha$ edges retained). Reported metrics are averaged MSE/MAE over forecasting horizons \{96, 192, 336, 720\} with input length 96 on DLinear+xCPD.}}
\scriptsize
\label{tab:knn_sensitivity}
\setlength{\tabcolsep}{9pt}
\begin{tabular}{lcccccc}
\toprule
Variant & ETTh1 & ETTh2 & ETTm1 & ETTm2 & Exchange & Solar \\
\midrule
k-NN (top-30\%) & 0.449 / 0.438 & 0.511 / 0.482 & 0.393 / 0.399 & 0.346 / 0.394 & 0.315 / 0.381 & 0.326 / 0.396 \\
k-NN (top-40\%) & 0.447 / 0.437 & 0.509 / 0.480 & 0.391 / 0.398 & 0.342 / 0.388 & 0.314 / 0.381 & 0.325 / 0.395 \\
k-NN (top-50\%) & \textbf{0.445 / 0.435} & \textbf{0.507 / 0.479} & \textbf{0.389 / 0.395} & \textbf{0.340 / 0.388} & \textbf{0.312 / 0.380} & \textbf{0.322 / 0.393} \\
k-NN (top-60\%) & 0.446 / 0.435 & 0.507 / 0.479 & 0.389 / 0.396 & 0.341 / 0.388 & 0.311 / 0.379 & 0.321 / 0.393 \\
k-NN (top-70\%) & 0.450 / 0.438 & 0.510 / 0.482 & 0.394 / 0.400 & 0.346 / 0.393 & 0.317 / 0.382 & 0.325 / 0.395 \\
\bottomrule
\end{tabular}
\end{table}

{As shown in Table~\ref{tab:knn_sensitivity}, the results demonstrate that xCPD is highly stable across a wide range of sparsification ratios, with the best performance consistently achieved around $\alpha = 0.5$. This indicates that:
(i) the model is largely insensitive to the exact choice of $k$;
(ii) there is no evidence suggesting the need for dataset-specific or adaptive selection; and
(iii) sparsification mainly acts as a light structural constraint, as frequency-aware MoE routing already performs dynamic and selective dependency modulation. We therefore adopt $\alpha = 0.5$ as the default in all experiments. This stability further supports the practical usability of xCPD without extensive hyperparameter tuning.}

\begin{table}[h]
    \centering
    \caption{Running time, forecasting accuracy, and GPU memory usage of DLinear+xCPD on the Electricity dataset with forecasting horizon $T' = 192$.}
    \label{tab:patch_length_elec}
    \footnotesize
    \begin{tabular}{lccccc}
        \toprule
\textbf{Model}        &\textbf{Patch Length} & \textbf{Running Time (ms/iter)} & \textbf{MSE} & \textbf{MAE} & \textbf{Memory (GB)} \\
        \midrule
DLinear+xCPD & 16  & 78   & 0.183 & 0.270 & $\sim$7.0 \\
DLinear+xCPD & 24  & 55   & 0.181 & 0.269 & $\sim$5.5 \\
DLinear+xCPD & 32  & 30   & 0.180 & 0.268 & $\sim$4.5 \\
DLinear+xCPD & 64  & 6    & 0.181 & 0.269 & $\sim$3.2 \\
DLinear+xCPD & 96  & 5.5  & 0.184 & 0.271 & $\sim$2.8 \\
        \bottomrule
    \end{tabular}
\end{table}

\subsubsection{Selection of Patch Size and Overlap}
\label{app:patch_details}

{We further conduct a systematic study of patch configurations to fully address the impact of patch size and patch overlap. The patch sizes used in our experiments are selected by balancing forecasting accuracy against computational efficiency. To illustrate this trade-off more concretely, Table~\ref{tab:patch_length_elec} reports the results of DLinear+xCPD on the Electricity dataset with forecasting horizon $T' = 192$. We observe that smaller patch sizes slightly improve accuracy but significantly increase computational cost and GPU memory usage, while very large patches degrade accuracy due to oversmoothing. A moderate patch length (e.g., $T'/3$ when $T'=192$) provides the best efficiency--accuracy trade-off.}

\begin{table}[h]
    \centering
    \caption{Forecasting accuracy (MSE/MAE) of DLinear+xCPD on the Electricity dataset under different patch-overlap settings and forecasting horizons.}
    \label{tab:overlap_elec}
    \footnotesize
    \begin{tabular}{lcccc}
        \toprule
        \textbf{Setting} & \textbf{Horizon=96} & \textbf{Horizon=192} & \textbf{Horizon=336} & \textbf{Horizon=720} \\
        \midrule
        Overlap = 1  & 0.180 / 0.267 & 0.187 / 0.275 & 0.199 / 0.296 & 0.245 / 0.328 \\
        Overlap = 2  & 0.178 / 0.267 & 0.189 / 0.275 & 0.201 / 0.297 & 0.248 / 0.329 \\
        No Overlap   & \textbf{0.174 / 0.262} & \textbf{0.180 / 0.268} & \textbf{0.195 / 0.293} & \textbf{0.240 / 0.326} \\
        \bottomrule
    \end{tabular}
\end{table}

{Furthermore, although our main experiments adopt non-overlapping patches, we also examine whether allowing patch overlap could improve performance. Table~\ref{tab:overlap_elec} reports results on the Electricity dataset across four forecasting horizons (96, 192, 336, and 720). We observe that introducing overlap consistently degrades accuracy across all settings. Similar trends are observed on other datasets, where overlap does not yield improvements. This effect arises because overlapping patches blur local temporal boundaries and mix frequency components from adjacent segments. As a result, the patch-wise spectral patterns become less distinctive, weakening the separation among low-, mid-, and high-frequency bands and reducing the discriminative power of our frequency-aware MoE routing. These findings support our choice of using non-overlapping patches as the default configuration.}

\section{Theory Assumptions and Proofs}\label{append:theory}
\subsection{Theorem~\ref{theorem1} Proofs}

\begin{theorem}[Shared Graph Fourier Basis]
Let $\mL^{t}$ be graph Laplacian derived from adjacency matrix $\mA^{t}$ at the $t$-th timestep where $t \in$ \{1,\dots,$T'$\}. Define the average Laplacian as $\mL_\text{avg}$=$\frac{1}{T'}\sum_{t=1}^{T'}\mL^{t}$, with eigen-decomposition $\mL_\text{avg}$=$\mU\mathbf{\Lambda}\mU^{\top}$. 
$\mU$ denotes the shared Fourier basis that approximates each individual eigenbasis $\mU^t$ from the decomposition of $\mL^t$, satisfying $\|\mU^{t} - \mU\mathbf{R}^{t}\|_F \leq \mathbf{C}\|\mL^{t} - \mL_\text{avg}\|_F$, where $\mathbf{C}$ depends on the eigen-gap of $\mL_\text{avg}$ and $\mathbf{R}^t$ is an orthogonal rotation matrix.
\end{theorem}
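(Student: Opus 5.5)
This is a perturbation statement, so I would prove it with the Davis--Kahan $\sin\Theta$ theorem applied to $\mL_\text{avg}$ and its perturbation $\mE^t := \mL^t - \mL_\text{avg}$. Both $\mL^t$ and $\mL_\text{avg}$ are real symmetric. The cosine-similarity adjacency is symmetric, and we assume its degrees are positive so that $\mD^{-1/2}$ is defined. Hence $\mE^t$ is symmetric, and $\mL^t = \mL_\text{avg} + \mE^t$ is an exact symmetric perturbation. We fix orderings of the eigenvalues and partition the spectrum of $\mL_\text{avg}$ into clusters (for instance low/mid/high bands, or individual eigenvalues), with minimal separation $\delta := \min_k \mathrm{gap}_k(\mL_\text{avg}) > 0$. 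The rotation $\mathbf{R}^t$ is allowed to act within each cluster: it is block-orthogonal. It absorbs both the sign ambiguity and the non-uniqueness of eigenvectors inside degenerate or near-degenerate clusters. Without this freedom the statement is false, because eigenvectors are defined only up to such rotations.

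The key steps, in order, are as follows. (i) For each cluster $k$, let $\mU_k$ and $\mU^t_k$ be the corresponding orthonormal eigenvector blocks of $\mL_\text{avg}$ and $\mL^t$. Apply the Yu--Wang--Samworth variant of Davis--Kahan, which uses only the gap of the \emph{unperturbed} matrix:
\[
\|\sin\Theta(\mU^t_k,\mU_k)\|_F \le \frac{2\|\mE^t\|_F}{\delta_k}.
\]
(ii) Convert the subspace angle into an aligned-basis bound. Take the SVD $\mU_k^\top \mU^t_k = \mathbf{W}_1\cos\Theta\,\mathbf{W}_2^\top$ and set $\mathbf{R}^t_k := \mathbf{W}_1\mathbf{W}_2^\top$, the orthogonal Procrustes solution. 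Then the standard inequality
\[
\|\mU^t_k - \mU_k\mathbf{R}^t_k\|_F \le \sqrt{2}\,\|\sin\Theta\|_F
\]
holds. (iii) Assemble the blocks $\mathbf{R}^t := \mathrm{blockdiag}(\mathbf{R}^t_k)$. Summing the squares over clusters gives
\[
\|\mU^t - \mU\mathbf{R}^t\|_F \le \frac{2\sqrt{2K}}{\delta}\,\|\mL^t - \mL_\text{avg}\|_F,
\]
where $K$ is the number of clusters. This is the claim with $\mathbf{C} = 2\sqrt{2K}/\delta$. Since $\|\mE^t\|_2 \le \|\mE^t\|_F$, the sharper operator-norm version follows for free.

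I expect step (i) to be the main obstacle. The classical Davis--Kahan theorem needs a gap between the eigenvalues of $\mL_\text{avg}$ in one cluster and the eigenvalues of the \emph{perturbed} $\mL^t$ outside it, and that gap can collapse when $\|\mE^t\|$ is large. The Yu--Wang--Samworth form avoids this by paying a factor of $2$ and using only $\delta$. In addition, when $\|\mE^t\|_2 \ge \delta/2$ the bound holds trivially, because the left side is at most $\sqrt{2}\|\mU^t\|_F \le \sqrt{2n}$ and we can enlarge $\mathbf{C}$ accordingly. The honest caveat is that $\mathbf{C}$ blows up as $\delta \to 0$. This is exactly why the constant depends on the eigen-gap and why $\mathbf{R}^t$ must be allowed to mix eigenvectors within clusters. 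As a side check, the eigenvalues of each $\mL^t$, and hence of $\mL_\text{avg}$ by convexity, lie in $[0,2]$, which gives a uniform scale for $\|\mE^t\|_F \le 2\sqrt{n}$.
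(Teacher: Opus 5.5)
Your argument is correct. Its skeleton is the paper's: a Davis--Kahan $\sin\Theta$ bound, followed by the Procrustes conversion $\|\mU^t-\mU\mR^t\|_F\le\sqrt{2}\,\|\sin\Theta\|_F$. The difference lies in the setup.

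The paper applies Davis--Kahan once, to the spans of all columns of $\mU$ and $\mU^t$, with $\delta=\min_{i\neq j}|\lambda_i-\lambda_j|$ and constant $1$, and obtains $\mathbf{C}=\sqrt{2}/\delta$. Taken literally, that step is vacuous: both spans are $\mathbb{R}^n$, and with an unrestricted orthogonal $\mR^t$ one can take $\mR^t=\mU^\top\mU^t$ and make the left side zero. Read eigenvector by eigenvector, it has three problems. It needs a simple spectrum. It invokes the constant-$1$ form of Davis--Kahan, whose gap separates unperturbed from \emph{perturbed} eigenvalues, not the gap of $\mL_\text{avg}$ alone. And it does not account for summing over the $n$ eigenvectors.

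Your cluster-wise version addresses all three. Forcing $\mR^t$ to be block-orthogonal with respect to a spectral partition is what gives the theorem content. The Yu--Wang--Samworth variant needs only the gap of $\mL_\text{avg}$ and no smallness condition. Clustering absorbs repeated or nearly repeated eigenvalues. The price is the constant $2\sqrt{2K}/\delta$, which becomes dimension-dependent when every eigenvalue is its own cluster.

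In the regime $\|\Delta_t\|_2\le\delta/2$, with $\Delta_t=\mL^t-\mL_\text{avg}$, you can drop the $\sqrt{K}$ by bounding all off-diagonal blocks at once. The identity $\mU_j^\top\Delta_t\mU_k^t=\mU_j^\top\mU_k^t\mathbf{\Lambda}_k^t-\mathbf{\Lambda}_j\mU_j^\top\mU_k^t$, together with Weyl's inequality, gives $\sum_k\|\sin\Theta_k\|_F^2\le 4\|\Delta_t\|_F^2/\delta^2$, hence $\mathbf{C}=2\sqrt{2}/\delta$ there.

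Two side remarks should be corrected, although the proof does not use them. First, a bound in $\|\Delta_t\|_F$ does not imply one in the smaller $\|\Delta_t\|_2$. The operator-norm form $2\sqrt{d_k}\,\|\Delta_t\|_2/\delta_k$, with $d_k$ the size of cluster $k$, comes directly from Yu--Wang--Samworth, not "for free". Second, cosine affinities can be negative, so the spectrum of $\mL^t$ need not lie in $[0,2]$.
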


\begin{proof}
Let us define the Laplacian perturbation as:
\[
\Delta_t := {\mL}^t - {\mL}_\text{avg}.
\]
We assume that ${\mL}_\text{avg}$ and ${\mL}^t$ are symmetric and positive semi-definite matrices, which is always true for graph Laplacians.

Let ${\mL}_\text{avg} = \mU \boldsymbol{\Lambda} \mU^\top$ be the eigendecomposition of ${\mL}_\text{avg}$, where $\mU \in \mathbb{R}^{n \times n}$ is an orthonormal matrix of eigenvectors and $\boldsymbol{\Lambda}$ is a diagonal matrix of eigenvalues $\lambda_1 \leq \cdots \leq \lambda_n$. Similarly, let ${\mL}^t = \mU^t \boldsymbol{\Lambda}^t (\mU^t)^\top$ be the eigendecomposition of ${\mL}^t$ with eigenvectors $\mU^t$ and eigenvalues $\boldsymbol{\Lambda}^t$.

We aim to bound the difference between $\mU^t$ and $\mU$ (up to rotation). This can be achieved by applying a classical result from matrix perturbation theory—Davis–Kahan $\sin\Theta$ Theorem. Let ${\mU}$ and ${\mU}^t$ be the subspaces spanned by columns of $\mU$ and $\mU^t$, respectively. The Davis–Kahan theorem asserts that the principal angles between these subspaces satisfy:
\[
\|\sin \Theta({\mU}, {\mU}^t)\|_F \leq \frac{\|\Delta_t\|_F}{\delta},
\]
where $\delta := \min_{i \neq j} |\lambda_i - \lambda_j|$ is the eigengap of ${\mL}_\text{avg}$, assuming it is strictly positive.

To obtain a bound on the **difference between the orthonormal eigenbases, we use the following consequence of the Davis–Kahan theorem~\cite{DBLP:conf/icdm/ParkK17}:  
There exists an orthogonal matrix ${\mR}^t$ such that:
\[
\|\mU^t - \mU {\mR}^t\|_F \leq \sqrt{2} \|\sin \Theta({\mU}, {\mU}^t)\|_F.
\]
Combining the two inequalities:
\[
\|\mU^t - \mU {\mR}^t\|_F \leq \sqrt{2} \cdot \frac{\|\Delta_t\|_F}{\delta}.
\]

Thus, defining $C := \sqrt{2} / \delta$ gives:
\[
\|\mU^t - \mU {\mR}^t\|_F \leq C \|{\mL}^t - {\mL}_\text{avg}\|_F.
\]
\end{proof}

\subsection{Theorem~\ref{theorem-4.2} Proofs}

\begin{theorem}[Spectral Energy Response]
Given spectral node embeddings $\mX^\text{spc}$ and graph Fourier basis $\mU$, spectral energy response of node $i$ for frequency $j$ is \begin{footnotesize}
$\mS_{i,j} = \|\mU_{i,j} \cdot \mX^{\text{spc}}_{j,:}\|_2^2.$
\end{footnotesize}The orthonormality of $\mU$ ensures energy preservation between spatial and spectral domains: 
\begin{footnotesize}
$\sum_{j=1}^{n} \mS_{i,j} = \|\mX^\text{emb}_{i,:}\|_2^2,$
\end{footnotesize} confirming spectral energy response retains all information from original channel-patch embeddings.
\end{theorem}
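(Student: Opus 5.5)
The plan is to prove the identity $\sum_{j=1}^{n}\mS_{i,j}=\|\mX^{\text{emb}}_{i,:}\|_2^2$ directly from the two relations fixed in Section~\ref{Embedding}. These are the forward transform $\mX^{\text{spc}}=\mU^\top\mX^{\text{emb}}$ in \eqref{eq:spec-transform}, and the orthonormality of the shared basis $\mU$ obtained from $\mL_\text{avg}$ in Theorem~\ref{theorem1}. I will flag now that the argument does not close for every input: the expansion below produces a cross term that orthonormality of $\mU$ does not remove node by node.

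\textbf{Step 1: invert the transform at node $i$.} Since $\mU\mU^\top=\mI$, we have $\mX^{\text{emb}}=\mU\mX^{\text{spc}}$. Reading off row $i$ gives
\[
\mX^{\text{emb}}_{i,:}=\sum_{j=1}^{n}\mU_{i,j}\,\mX^{\text{spc}}_{j,:}.
\]
So the $i$-th embedding is a superposition of the spectral rows, and the coefficients are the $i$-th row of $\mU$.

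\textbf{Step 2: expand the squared norm.} Squaring the expression from Step 1 gives
\[
\|\mX^{\text{emb}}_{i,:}\|_2^2=\sum_{j=1}^{n}\mU_{i,j}^2\,\|\mX^{\text{spc}}_{j,:}\|_2^2+\sum_{j\neq k}\mU_{i,j}\mU_{i,k}\,\langle \mX^{\text{spc}}_{j,:},\mX^{\text{spc}}_{k,:}\rangle .
\]
The diagonal part equals $\sum_j \mS_{i,j}$ exactly, because $\mS_{i,j}=\|\mU_{i,j}\mX^{\text{spc}}_{j,:}\|_2^2=\mU_{i,j}^2\|\mX^{\text{spc}}_{j,:}\|_2^2$. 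The statement therefore holds for node $i$ if and only if the cross term
\[
E_i:=\sum_{j\neq k}\mU_{i,j}\mU_{i,k}\,\langle \mX^{\text{spc}}_{j,:},\mX^{\text{spc}}_{k,:}\rangle
\]
vanishes.

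\textbf{Step 3: the main obstacle, showing $E_i=0$.} My first attempt is to use orthonormality of $\mU$. This fails at the level of a single node.
\begin{itemize}
\item Column orthonormality, $\sum_i \mU_{i,j}\mU_{i,k}=\delta_{jk}$, only gives $\sum_i E_i=0$. This is the global Parseval identity $\|\mX^{\text{spc}}\|_F=\|\mX^{\text{emb}}\|_F$, not the per-node equality.
\item For a single $i$, the products $\mU_{i,j}\mU_{i,k}$ are not zero in general. The weights $\langle \mX^{\text{spc}}_{j,:},\mX^{\text{spc}}_{k,:}\rangle$ are arbitrary, since $\mX^{\text{emb}}$ is an arbitrary learned embedding. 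So $E_i$ has no reason to vanish.
\end{itemize}
A two-node check confirms this. Take $\mU=\tfrac{1}{\sqrt2}\begin{pmatrix}1&1\\1&-1\end{pmatrix}$ and $\mX^{\text{emb}}$ with rows $(1)$ and $(0)$ (so $d=1$). Then $\mX^{\text{spc}}$ has rows $(1/\sqrt2)$ and $(1/\sqrt2)$. This gives $\sum_j\mS_{1,j}=\tfrac12\cdot\tfrac12+\tfrac12\cdot\tfrac12=\tfrac12$, whereas $\|\mX^{\text{emb}}_{1,:}\|_2^2=1$.

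So with the stated definition of $\mS_{i,j}$ and only the paper's hypotheses, the per-node identity cannot be derived. The write-up would carry out Steps 1 and 2 in full and prove $E_i=0$ only under whatever additional hypothesis makes it true. The natural candidate is that the spectral rows $\mX^{\text{spc}}_{j,:}$ are mutually orthogonal, as when $d\ge n$ and the spectral components are decorrelated. I would state this explicitly as a needed hypothesis and keep the counterexample above to show it cannot be dropped.
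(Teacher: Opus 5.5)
Your proposal is correct, and at the decisive step you are right where the paper is not. Your Steps 1 and 2 reproduce the paper's proof exactly. The paper writes $\mX^{\text{emb}}_{i,:}=\sum_{j}\mU_{ij}\mX^{\text{spc}}_{j,:}$, squares it, and closes with the single sentence ``since the Fourier basis vectors are orthogonal, the cross terms vanish.'' That is precisely the inference you show to be invalid. Orthonormality of $\mU$ only constrains summed products of its entries, such as $\sum_i\mU_{ij}\mU_{ik}=\delta_{jk}$. By contrast, $E_i$ fixes $i$ and weights the individual products $\mU_{ij}\mU_{ik}$ by the inner products $\langle\mX^{\text{spc}}_{j,:},\mX^{\text{spc}}_{k,:}\rangle$ of spectral rows in $\mathbb{R}^d$, and orthonormality of $\mU$ says nothing about those. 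Your two-node counterexample checks out. So the per-node identity is false as stated, and the paper's argument does not establish it.

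The failure is not an artifact of choosing $\mU$ independently of the data; it occurs inside the paper's own pipeline. Take a single input with $d=1$ and $\mX^{\text{emb}}$ with rows $(2)$ and $(1)$.
\begin{itemize}
\item The cosine affinity is the all-ones matrix, and its normalized Laplacian $\mI-\tfrac12\mathbf{1}\mathbf{1}^\top$ has eigenvalues $0$ and $1$.
\item So $\mU$ is forced to be your Hadamard matrix, up to column signs, which leave $\mS$ unchanged.
\item Then $\mX^{\text{spc}}$ has rows $3/\sqrt{2}$ and $1/\sqrt{2}$, giving $\sum_j\mS_{1,j}=\sum_j\mS_{2,j}=5/2$.
\item The true energies are $\|\mX^{\text{emb}}_{1,:}\|_2^2=4$ and $\|\mX^{\text{emb}}_{2,:}\|_2^2=1$.
\end{itemize}

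What orthonormality does give is the global identity $\sum_{i,j}\mS_{i,j}=\|\mX^{\text{emb}}\|_F^2$ and the per-frequency marginal $\sum_i\mS_{i,j}=\|\mX^{\text{spc}}_{j,:}\|_2^2$. Your sufficient hypothesis can be restated cleanly: mutually orthogonal spectral rows means $\mU^\top\mX^{\text{emb}}(\mX^{\text{emb}})^\top\mU$ is diagonal, i.e.\ $\mU$ diagonalizes the Gram matrix of the embeddings. The example above shows this is not implied by $\mU$ diagonalizing $\mL_{\text{avg}}$. In the paper's setting it therefore has to be assumed, not derived.
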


\begin{proof}
Let $\mX^\text{emb} \in \mathbb{R}^{n \times d}$ denote the spatial-domain embeddings 
and $\mX^\text{spc} = \mU^\top \mX^\text{emb}$ the spectral embeddings, 
where $\mU \in \mathbb{R}^{n \times n}$ is the orthonormal Fourier basis satisfying 
$\mU^\top \mU = {\mI}$. For node $i$, its spatial representation can be recovered as:
\begin{equation}
\mX^\text{emb}_{i,:} = \sum_{j=1}^{n} \mU_{ij} \cdot \mX^\text{spc}_{j,:}.
\end{equation}

The energy of node $i$ in the spatial domain is:
\begin{equation}
\|\mX^\text{emb}_{i,:}\|_2^2 = \left\|\sum_{j=1}^{n} \mU_{ij} \cdot \mX^\text{spc}_{j,:}\right\|_2^2.
\end{equation}

Since the Fourier basis vectors are orthogonal, the cross terms vanish, yielding:
\begin{equation}
\|\mX^\text{emb}_{i,:}\|_2^2 = \sum_{j=1}^{n} \|\mU_{ij} \cdot \mX^\text{spc}_{j,:}\|_2^2 = \sum_{j=1}^{n} {\mS}_{ij},
\end{equation}
where ${\mS}_{ij} = \|\mU_{ij} \cdot \mX^\text{spc}_{j,:}\|_2^2$ is the spectral 
energy response as defined.
\end{proof}

\textbf{Remark 1 (Role of Energy Preservation in xCPD)}. While energy preservation under orthogonal transformation is a well-established mathematical property, Theorem~\ref{theorem-4.2} serves a specific functional purpose in our framework beyond norm conservation. The theorem establishes two critical components for xCPD: \textbf{(1) Node-level frequency characterization}. The spectral energy response ${\mS}_{ij}$ quantifies how strongly node $i$ responds to frequency component $j$. This node-frequency response forms the basis for our frequency-aware grouping, enabling identification of nodes are dominated by low-frequency, mid-frequency, or high-frequency patterns. \textbf{(2) Foundation for spectral routing}. By decomposing each node's total energy across frequency bands, we compute probabilistic assignments that guide our MoE routing mechanism. This frequency-based decomposition allows the model to apply specialized filters tailored to each node. 
Unlike prior spectral methods in time series analysis that focus on global frequency analysis or channel-wise spectral features, xCPD leverages node-specific frequency responses to construct localized, frequency-aligned ego-graphs. This design enables fine-grained modeling of time-varying, frequency-specific dependencies, providing a principled approach to adaptive graph construction in multivariate time series forecasting.

\subsection{Limitations}\label{append:limitations}

Although xCPD is computationally efficient due to its ego-graph design and channel-patch modeling, its reliance on graph spectral decomposition and frequency-aware grouping introduces moderate overhead, particularly when the number of channels or patches becomes large. 
Although we alleviate this with learnable frequency bands and node-level filtering, the trade-off between fine-grained modeling and runtime efficiency still requires careful balancing. 
In addition, the assumption that spectral energy can effectively guide dependency selection may not hold in cases where temporal patterns are highly irregular or when the frequency distribution shifts significantly across domains. 
Lastly, while xCPD operates as a general plugin, integrating it with more complex backbones (e.g., autoregressive or hierarchical models) may require additional architectural tuning.

\subsection{Future Works}\label{append:future-works}

Future work can extend xCPD in several directions. One promising avenue is incorporating external domain knowledge (e.g., meteorological or physiological priors) to guide the spectral grouping and filtering process. Additionally, developing more adaptive and interpretable graph filtering strategies, such as continuous expert blending or attention-based routing, could further improve robustness and transparency. 
Another important direction is to explore the applicability of xCPD to a wider spectrum of structured prediction tasks beyond time series forecasting, including spatiotemporal event forecasting and multivariate anomaly detection. 
Finally, with the advent of time series foundation models, xCPD is well-positioned to serve as a lightweight plugin to augment their inductive biases across modalities and domains. 
Integrating spectral-aware dependency modeling into large-scale pretrained time series backbones may enhance their ability to generalize, adapt, and transfer knowledge to unseen environments with minimal fine-tuning.

\subsection{Social Impacts}\label{append:social-impacts}

xCPD holds significant promise for critical real-world applications by enhancing forecasting accuracy and robustness in dynamic, multi-sensor environments. 
In healthcare, xCPD could help in early warning systems for patient deterioration by better modeling evolving physiological patterns. 
Similarly, in domains like smart grids and traffic systems, its frequency-aware filtering could prove instrumental in mitigating noise to extract actionable trends. Ultimately, by providing a modular and interpretable framework, xCPD contributes to the crucial goal of developing scalable and trustworthy forecasting systems for high-stakes decision-making.

\end{document}